\newcolumntype{I}{>{\slshape}X}
\newcolumntype{Y}{>{\raggedright\arraybackslash}X}
\begin{document}

\title{A New Family of Poisson Non-negative Matrix Factorization Methods Using the Shifted Log Link}

\author{\name Eric Weine \email ericw456@mit.edu \\
       \addr Department of Electrical Engineering and Computer Science\\
       Massachusetts Institute of Technology\\
       Cambridge, MA, USA
       \AND
       \name Peter Carbonetto \email pcarbo@uchicago.edu \\
       \addr Department of Human Genetics\\
       University of Chicago\\
       Chicago, IL, USA
       \AND
       \name Rafael A. Irizarry \email rafael\_irizarry@dfci.harvard.edu \\
       \addr Department of Data Science\\
       Dana-Farber Cancer Institute\\
       Boston, MA, USA
      \AND
       \name Matthew Stephens \email mstephens@uchicago.edu \\
       \addr Department of Statistics and Department of Human Genetics\\
       University of Chicago\\
       Chicago, IL, USA
       }

\editor{My editor}

\maketitle

\begin{abstract}
Poisson non-negative matrix factorization (NMF) is a widely used method to find interpretable ``parts-based'' decompositions of count data. While many variants of Poisson NMF exist, existing methods assume that the ``parts'' in the decomposition combine additively.
This assumption may be natural in some settings, but not in others. 
Here we introduce Poisson NMF with the shifted-log link function to relax this assumption. The shifted-log link function has a single tuning parameter, and as this parameter varies the model changes from assuming that parts combine additively (i.e., standard Poisson NMF) to assuming that parts combine more multiplicatively. We provide an algorithm to fit this model by maximum likelihood, and also an approximation that substantially reduces computation time for large, sparse datasets (computations scale with the number of non-zero entries in the data matrix). We illustrate these new methods on a variety of real datasets. Our examples show how the choice of link function in Poisson NMF can substantively impact the results, and how in some settings the use of a shifted-log link function may improve interpretability compared with the standard, additive link. 
\end{abstract}

\begin{keywords}
  non-negative matrix factorization, topic modeling, single-cell RNA sequencing, count data, approximate inference.
\end{keywords}

\section{Introduction}

Non-negative Matrix Factorization (NMF) \citep{lee1999learning} is a widely used method for dimensionality reduction of non-negative data matrices. Given a non-negative data matrix $\mathbf{Y}$, NMF methods attempt to find low-rank, non-negative matrices $\mathbf{L}$ and $\mathbf{F}$ such that 
\begin{equation}\label{eq:mf_goal1}
    \mathbf{Y} \approx \mathbf{L}\mathbf{F}^{\top} = \sum_{k=1}^K \mathbf{l}_k \mathbf{f}_k^\top.
\end{equation}
Here $\mathbf{l}_k$ (respectively $\mathbf{f}_k$) denotes the $k^{\textrm{th}}$ column of $\mathbf{L}$ (respectively $\mathbf{F}$).
Thus, NMF decomposes the data into a sum of $K$ components; if each of these $K$ components, {\it individually}, has a physical or scientific interpretation then the decomposition is said to provide a ``parts-based'' representation of the data \citep{lee1999learning}. The emphasis on {\it individually} is crucial here, because it distinguishes the idea of a parts-based representation from other low-dimensional representations, or embeddings. While other matrix factorization methods, like principal components analysis (PCA), also provide a decomposition of the form \eqref{eq:mf_goal1}, \cite{lee1999learning} argue that the decompositions provided by NMF tend to produce more individually-interpretable components. That is, NMF more often provides a parts-based representation. This feature has led to widespread adoption of NMF in practical applications \citep{pritchard2000inference,lda,luce2016using,dey2017visualizing, mackevicius2019unsupervised,cancersig}.  

Count data, which consist of integer counts $(y_{ij})$, 
represent a common type of non-negative data frequently analyzed by NMF. Examples include word counts in documents, transcript counts in RNA-seq, and mutation counts in tumors. Most NMF methods for count data assume a Poisson model, where the elements $y_{ij}$ are independent and Poisson distributed, and where the expected value of the count variables are modeled directly as $\mathbb{E}[y_{ij}] = (\mathbf{L}\mathbf{F}^{\top})_{ij}$ \citep[e.g.][]{lee1999learning,gopalan2015scalable,zito2024compressive,landy2025bayesnmf}. This corresponds to assuming that the parts in the decomposition \eqref{eq:mf_goal1} contribute additively to the expectation. We refer to such approaches as \emph{standard Poisson NMF}.

While the additive assumption of standard Poisson NMF may be natural in some settings, it may be less appropriate in others where components may combine more multiplicatively (e.g., gene expression, see \cite{sanford2020gene, zhou2024analysis}). Here we introduce a more flexible approach to Poisson NMF that incorporates a shifted-log link function to relate the expected counts to the elements of $(\mathbf{L}\mathbf{F}^{\top})_{ij}$.
Depending on the value of a single hyper-parameter, the shifted log link can capture a range of behaviors from additive to more multiplicative.

Our work makes three main contributions. First, we provide an algorithm to fit Poisson NMF with the shifted-log link function by maximum likelihood. Second, because the MLE for this model is computationally impractical for very large datasets, we develop an efficient approximation to the log-likelihood whose computational complexity scales only with the number of non-zero entries in the data, allowing efficient (approximate) maximum likelihood estimation for sparse datasets common in text and biological applications. Third, we demonstrate our methods on real and simulated data, showing that the choice of link function can substantially impact results, and that the shifted-log link can produce more interpretable parts-based representations than standard Poisson NMF in some settings.


Our new approach to Poisson NMF is a special case of a \emph{generalized bi-linear model} (GBM) \citep{choulakian1996generalized} (see also \cite{collins2001generalization}) which, analogous to a generalized linear model \citep{glms-book}, uses a link function to relate the expected value of a distribution in the exponential family to a bi-linear term (i.e., $\mathbf{L}\mathbf{F}^{\top}$). Previous work has provided methods to fit Poisson GBMs with a variety of link functions, including the canonical $\log$ link (\cite{townes2019feature,weine2024fast}) and the $\log(1+\exp(y))$ link
\citep{seeger2012fast}. However these link functions are not bijections on the non-negative real line, and so they do not naturally lead to NMF methods; rather, they are more like versions of PCA for Poisson data.  As far as we are aware, our paper is the first to provide a version of Poisson NMF with non-identity link function, and, furthermore, one that  
is practical for large, sparse count data.




\section{Poisson NMF with the shifted log
  link function, and connections to existing models}
  
Given a data matrix of counts $\mathbf{Y} \in \mathbb{N}_{0}^{n \times p}$ we assume the following Poisson NMF model: 
\begin{align} \label{eq:log1p_nmf_y}
y_{ij} &\overset{\textrm{indep.}}{\sim} \textrm{Poisson}(\lambda_{ij}) \\ \label{eq:log1p_nmf_link}
g(\lambda_{ij}; c)  &= b_{ij}  \\ 
\mathbf{B} &= \mathbf{L}\mathbf{F}^{\top}, \label{eq:log1p_nmf_B}
\end{align}
where $g$ denotes the following shifted-logarithm link function:
\begin{equation} \label{eq:shifted_log_link}
    g(\lambda; c) = \alpha_c\log\left(1 + \lambda/c\right).
\end{equation}
Here $\mathbf{L} \in \mathbb{R}_{\geq 0}^{n \times K}$, $\mathbf{F} \in \mathbb{R}_{\geq 0}^{p \times K}$, $c \in \mathbb{R}_{>0}$, and $\alpha_c := \max(1, c)$ is a scaling constant that we introduce for convenience to make the scale of $\mathbf{B}$ more comparable across values of $c$. 

Importantly, the shifted-log link, $g$, is a bijection on the non-negative real line, making it suited to NMF (unlike, say, the $\log$ link used in \cite{townes2019feature} and \cite{weine2024fast}).
Its inverse is $g^{-1}(b) = c \cdot (\exp(b / \alpha_c) - 1)$.
For brevity we refer to this link function as the ``log1p'' (``log 1 plus'') link, noting that it is really a family of link functions indexed by the choice of $c$. We similarly refer to the model \eqref{eq:log1p_nmf_y}-\eqref{eq:log1p_nmf_B} as the ``log1p NMF model'', or simply ``log1p NMF'', and denote its log-likelihood as $\ell_{\textrm{log1p}}(\mathbf{L}, \mathbf{F}, c; \mathbf{Y}).$

 
The behavior of the link function $g$ depends on the value of $c$ (Figure \ref{fig:fig_vis}). If $c$ is large, then $g(\lambda; c) \approx \lambda$, in which case log1p NMF becomes standard Poisson NMF where parts combine additively. If $c$ is small (i.e., near $0$), for a fixed $\lambda$ the quantity $\lambda / c$ gets large and thus $\log\left(1 + \frac{\lambda}c\right) \approx \log\left(\frac{\lambda}c\right) = \log(\lambda) - \log(c)$.  Thus for small $c$ the link function acts more like the $\log$ link and the parts combine more multiplicatively. That is, log1p NMF provides a bridge between standard Poisson NMF, in which parts combine additively, and a new set of NMF models in which parts combine more multiplicatively.  The next subsections provide more formal statements of these ideas.

\begin{figure}
\centering
\includegraphics[width=0.75\linewidth]{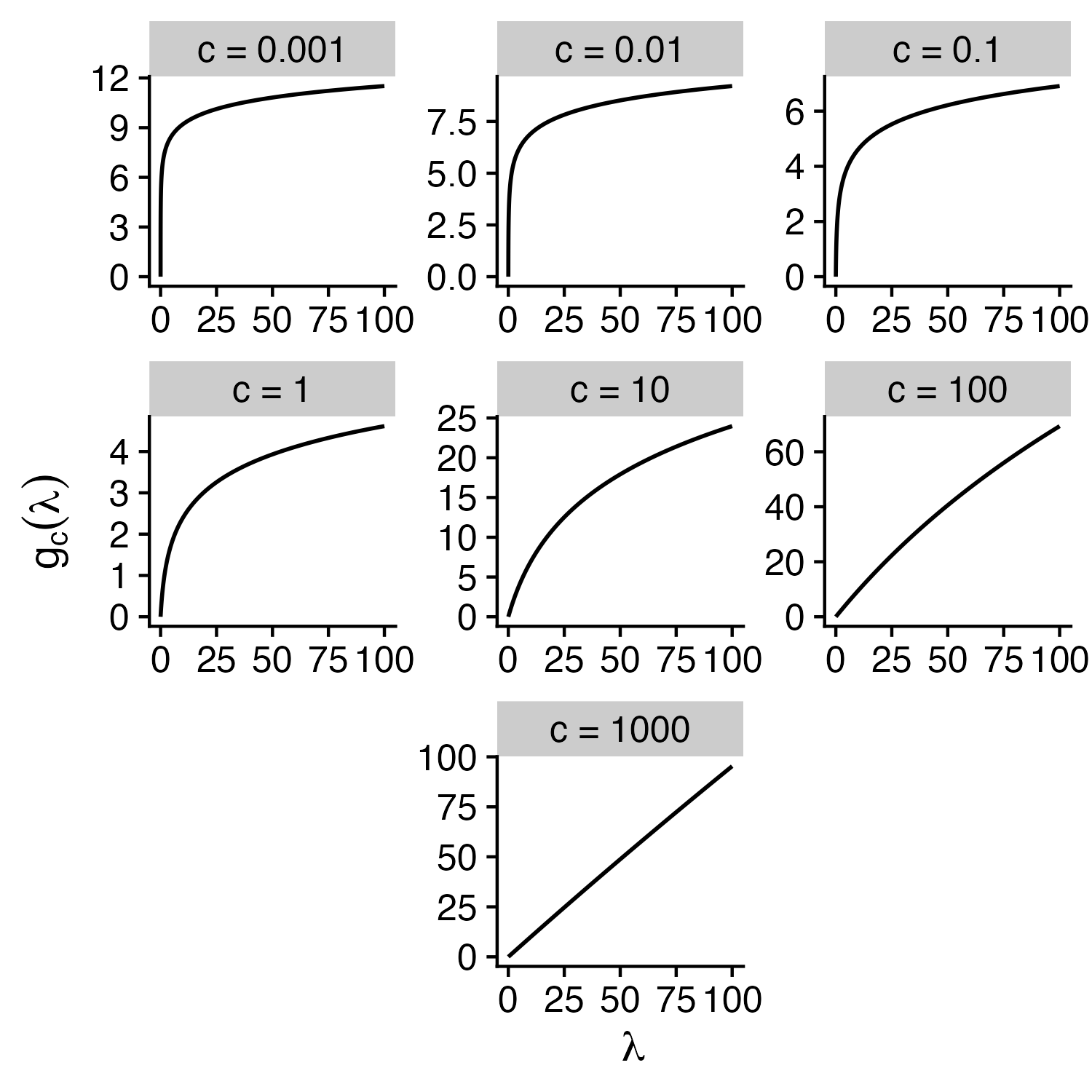}
\caption{Plots of the link function
$g(\lambda; c) = \alpha_c \times \log(1 + \lambda / c)$ for various values of $c$.}
\label{fig:fig_vis}
\end{figure}

\subsection{Connections with existing models and methods}

\subsubsection{Standard Poisson NMF}

The standard Poisson NMF model \citep{lee1999learning} is given by \eqref{eq:log1p_nmf_y}-\eqref{eq:log1p_nmf_B} but replacing $g(\lambda_{ij}; c)$ with $\lambda_{ij}$. That is, standard Poisson NMF
is equivalent to using the identity link function instead of a log1p link function. To reflect this we use $\ell_{\textrm{id}}( \mathbf{L}, \mathbf{F}; \mathbf{Y})$ to denote the log-likelihood for standard Poisson NMF.
As $c \rightarrow \infty$, the log1p link converges to the identity link ($g(\lambda; c) \rightarrow \lambda$) and so the log1p NMF model converges to the standard Poisson NMF model, as formalized in the following Theorem.
\begin{theorem}
\label{theorem:log1p-c-inf}
For any fixed $\mathbf{Y} \in \mathbb{N}_{0}^{n \times p}$, for all
$\mathbf{L} \in \mathbb{R}_{\geq 0}^{n \times K}$ and $\mathbf{F} \in
\mathbb{R}_{\geq 0}^{p \times K}$,
\begin{equation*}
\lim_{c \rightarrow \infty} \ell_{\textrm{log1p}}\left(
\mathbf{L}, \mathbf{F}, c; \mathbf{Y}\right) =
\ell_{\textrm{id}}( \mathbf{L}, \mathbf{F}; \mathbf{Y}).
\end{equation*}
\begin{proof}
See Appendix \ref{app:thm1_pf}.
\end{proof}
\end{theorem}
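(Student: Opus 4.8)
The plan is to reduce the claim to a termwise limit of a finite sum and then invoke continuity of the per-entry Poisson log-likelihood. First I would write both log-likelihoods in explicit summed form. Under the log1p model, $y_{ij}\sim\textrm{Poisson}(\lambda_{ij})$ with $\lambda_{ij}=g^{-1}(b_{ij};c)=c\,(\exp(b_{ij}/\alpha_c)-1)$ and $b_{ij}=(\mathbf{L}\mathbf{F}^\top)_{ij}$, so, writing $\lambda_{ij}^{(c)}:=c\,(\exp(b_{ij}/\alpha_c)-1)$,
\[
\ell_{\textrm{log1p}}(\mathbf{L},\mathbf{F},c;\mathbf{Y})=\sum_{i=1}^{n}\sum_{j=1}^{p}\Bigl[\,y_{ij}\log\lambda_{ij}^{(c)}-\lambda_{ij}^{(c)}-\log(y_{ij}!)\,\Bigr],
\]
and $\ell_{\textrm{id}}(\mathbf{L},\mathbf{F};\mathbf{Y})$ is the same expression with $\lambda_{ij}^{(c)}$ replaced by $b_{ij}$. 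Since $n$ and $p$ are fixed and finite, and $\mathbf{L},\mathbf{F}$ (hence each $b_{ij}\ge 0$) are fixed, it suffices to show that each summand converges to the corresponding summand of $\ell_{\textrm{id}}$ as $c\to\infty$; the result then follows by summing the finitely many limits.

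Next I would establish the scalar limit $\lambda_{ij}^{(c)}\to b_{ij}$. For $c>1$ we have $\alpha_c=\max(1,c)=c$, so for all sufficiently large $c$, $\lambda_{ij}^{(c)}=c\,(\exp(b_{ij}/c)-1)$. If $b_{ij}>0$, substitute $u=b_{ij}/c\to 0^{+}$ to get $\lambda_{ij}^{(c)}=b_{ij}\cdot\frac{\exp(u)-1}{u}\to b_{ij}$, which is immediate from the first-order Taylor expansion of $\exp$ (or L'Hôpital). If $b_{ij}=0$ then $\lambda_{ij}^{(c)}=0$ for every $c$, so the limit is trivially $b_{ij}=0$.

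Finally I would pass to the limit inside each summand. When $b_{ij}>0$, we have $\lambda_{ij}^{(c)}>0$ for all $c$ and $\lambda_{ij}^{(c)}\to b_{ij}>0$; since $\lambda\mapsto y_{ij}\log\lambda-\lambda-\log(y_{ij}!)$ is continuous on $(0,\infty)$, the summand converges to $y_{ij}\log b_{ij}-b_{ij}-\log(y_{ij}!)$, exactly the corresponding summand of $\ell_{\textrm{id}}$. When $b_{ij}=0$, the log1p summand equals the identity summand \emph{identically} (not merely in the limit), using the usual Poisson convention $0\log 0=0$ if $y_{ij}=0$, and noting both summands equal $-\infty$ if $y_{ij}>0$. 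Summing over the finitely many entries yields $\lim_{c\to\infty}\ell_{\textrm{log1p}}(\mathbf{L},\mathbf{F},c;\mathbf{Y})=\ell_{\textrm{id}}(\mathbf{L},\mathbf{F};\mathbf{Y})$.

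The closest thing to an obstacle is the boundary case $b_{ij}=0$ with $y_{ij}>0$, where both log-likelihoods are $-\infty$ and a naive ``continuity'' argument does not apply; this is handled by observing that the two per-entry expressions coincide exactly on such entries, so the identity is preserved under the limit in the extended reals. Apart from that bookkeeping, the argument is just an interchange of a finite sum with a continuous-function limit, together with the elementary fact that $c\,(\exp(b/c)-1)\to b$.
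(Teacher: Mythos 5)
Your proposal is correct and follows essentially the same route as the paper's Appendix~\ref{app:thm1_pf} proof: reduce the claim to the scalar limit $c\,(\exp(b/c)-1)\to b$ for each entry (treating $b=0$ separately) and then pass the limit through the per-entry Poisson log-likelihood. The only difference is that you spell out the continuity step and the $b_{ij}=0$, $y_{ij}>0$ boundary case explicitly, which the paper leaves implicit in the remark that the log-likelihood depends on the parameters only through the rate.
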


Note that the standard Poisson NMF model is also essentially equivalent to the multinomial factor model often referred to as the ``topic model'' \citep{carbonetto2021non}. Thus the log1p NMF model with large $c$ is also essentially equivalent to the standard topic model.


\subsubsection{Poisson GLM-PCA}

Another factor model for count data is the \emph{Poisson GLM-PCA} model
\citep{townes2019feature, nicol2024model,weine2024fast}. This model uses the log link instead of the log1p link, and dispenses with the non-negative assumption: 
\begin{align}
y_{ij} &\overset{\textrm{indep.}}{\sim} \textrm{Poisson}(\lambda_{ij}) \notag \\
\log\left(\lambda_{ij}\right) &= b_{ij} \label{eq:glmpca} \\
\mathbf{B} &= \mathbf{L}\mathbf{F}^{\top}, \notag
\end{align}
where $\mathbf{L} \in \mathbb{R}^{n \times K}$ and $\mathbf{F} \in \mathbb{R}^{p \times K}$. (Typically, additional orthogonality constraints are placed on $\mathbf{L}$ and $\mathbf{F}$ for identifiability.)
Note that under this model we have 
\begin{equation*}
\lambda_{ij} = \prod_{k=1}^K \exp\left(l_{ik} f_{jk}\right),
\end{equation*} 
so Poisson GLM-PCA assumes that the latent factors combine \textit{multiplicatively} to influence $\lambda_{ij}$. The $k^{\textrm{th}}$ factor can then be interpreted as capturing a log-fold change in the Poisson mean due to inclusion of that factor. Indeed, if $\lambda_{ij}, \lambda_{ij}'$ denote, respectively, the values of $\lambda$ excluding and including the $k^{\textrm{th}}$ factor, then in GLM-PCA
\begin{equation} \label{eq:effect_factor_K}
    \log\frac{\lambda'_{ij}}{\lambda_{ij}} = l_{ik} f_{jk}.
\end{equation}

A fundamental difference between GLM-PCA and log1p NMF is that GLM-PCA does not impose non-negative constraints
on $\mathbf{L}$ and $\mathbf{F}$. Thus, as its name suggests, GLM-PCA is much more like PCA than like NMF. Indeed,  non-negative constraints would usually not make sense in \eqref{eq:glmpca}, as they would imply $\lambda_{ij}\geq 1$.  However, the two models do have a connection: as $c$ approaches $0$ in log1p NMF, the impact of the $k^{\textrm{th}}$ factor is exactly \eqref{eq:effect_factor_K}.
More generally  we have the following Theorem to characterize the impact of the $k$th factor on $\lambda$ in the log1p NMF model:
\begin{theorem}
\label{theorem:log1p-c-zero}
Let $\lambda_{ij} > 0$ $\lambda_{ij}' > 0$ denote, respectively, the values of $\lambda$ excluding and including the $k^{\textrm{th}}$ factor in the log1p NMF model. Then,
\begin{equation} \label{eqn:lf_interpretation}
    \alpha_{c} \log\frac{\lambda'_{ij} + c}{\lambda_{ij} + c} = l_{ik} f_{jk}.
\end{equation}
Thus, as $c \rightarrow 0^{+}$ we have
\begin{equation}
    \label{eq:thm2_stmt2}
    \log\frac{\lambda'_{ij}}{\lambda_{ij}} = l_{ik} f_{jk}.
\end{equation}
\begin{proof}
See Appendix \ref{app:thm2_pf}.
\end{proof}
\end{theorem}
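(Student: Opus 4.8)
The plan is to peel back what ``including'' versus ``excluding'' the $k$th factor means, reduce \eqref{eqn:lf_interpretation} to a one-line identity between logarithms, and then obtain \eqref{eq:thm2_stmt2} by letting $c\to 0^+$.

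First I would make the setup precise. Write $b_{ij} = (\mathbf{L}\mathbf{F}^\top)_{ij} = \sum_{k'=1}^{K} l_{ik'}f_{jk'}$ for the full bilinear term and $b_{ij}^{(-k)} := b_{ij} - l_{ik}f_{jk}$ for the bilinear term with the $k$th factor removed. Under model \eqref{eq:log1p_nmf_B}, ``including'' the $k$th factor means $\lambda'_{ij}$ is the unique positive solution of $g(\lambda'_{ij};c) = b_{ij}$, and ``excluding'' it means $\lambda_{ij}$ is the unique positive solution of $g(\lambda_{ij};c) = b_{ij}^{(-k)}$; both are well defined because $g(\,\cdot\,;c)$ is a bijection of the non-negative reals onto themselves. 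Subtracting the two defining equations gives the single identity I will work from:
\[
g(\lambda'_{ij};c) - g(\lambda_{ij};c) = b_{ij} - b_{ij}^{(-k)} = l_{ik}f_{jk}.
\]

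Next I would substitute the definition $g(\lambda;c) = \alpha_c\log(1+\lambda/c)$ from \eqref{eq:shifted_log_link} into the left-hand side and simplify: since $\log(1+\lambda'_{ij}/c) - \log(1+\lambda_{ij}/c) = \log\frac{1+\lambda'_{ij}/c}{1+\lambda_{ij}/c} = \log\frac{c+\lambda'_{ij}}{c+\lambda_{ij}}$, where the factors of $1/c$ cancel in the ratio, the identity becomes $\alpha_c\log\frac{\lambda'_{ij}+c}{\lambda_{ij}+c} = l_{ik}f_{jk}$, which is exactly \eqref{eqn:lf_interpretation}. This step is pure bookkeeping with logarithms.

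Finally, for \eqref{eq:thm2_stmt2} I would take $c\to 0^+$ in \eqref{eqn:lf_interpretation}. Once $c\le 1$ we have $\alpha_c = \max(1,c) = 1$, so \eqref{eqn:lf_interpretation} reads $\log\frac{\lambda'_{ij}+c}{\lambda_{ij}+c} = l_{ik}f_{jk}$; holding $\lambda_{ij},\lambda'_{ij}>0$ fixed, we have $\frac{\lambda'_{ij}+c}{\lambda_{ij}+c}\to\frac{\lambda'_{ij}}{\lambda_{ij}}$, and continuity of $\log$ on $(0,\infty)$ gives $\log\frac{\lambda'_{ij}}{\lambda_{ij}} = l_{ik}f_{jk}$ in the limit. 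I do not expect a genuine obstacle here; the proof is short. The two points that need care are that ``including/excluding a factor'' must be interpreted in the link-transformed ($\mathbf{B}$) space, not in $\lambda$-space — it is precisely the additive structure $\mathbf{B} = \mathbf{L}\mathbf{F}^\top$ that produces the clean identity above — and that the scaling constant $\alpha_c$ is identically $1$ for all small $c$, so it plays no role in the $c\to 0^+$ limit and only the vanishing of the shift $c$ inside the logarithm matters.
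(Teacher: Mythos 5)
Your proposal is correct and follows essentially the same route as the paper's proof: both reduce \eqref{eqn:lf_interpretation} to the identity that the difference of the link values equals $l_{ik}f_{jk}$ (you subtract the two equations $g(\lambda';c)=b_{ij}$ and $g(\lambda;c)=b_{ij}^{(-k)}$, while the paper substitutes $g^{-1}$ and cancels the exponentials — the same computation run in the opposite direction), and both obtain \eqref{eq:thm2_stmt2} by noting $\alpha_c=1$ for $c\le 1$ and passing to the limit inside the logarithm. No gaps.
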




Thus, in both GLM-PCA and in log1p NMF with $c \rightarrow 0^+$, the $k^{\textrm{th}}$ factor represents the log-fold change in the Poisson mean due to the inclusion of that factor (with other factors held fixed).
This is what we mean when we say that factors combine ``more multiplicatively'' in log1p NMF with small $c$. 

\subsubsection{Frobenius-norm NMF applied to shifted log counts}

Another approach to dealing with count data, which is particularly common in the
analysis of single cell RNA-seq data, is to transform 
the counts and then apply methods designed for Gaussian data to the transformed
counts \citep{ahlmann2023comparison}. The shifted log \textit{transformation} is commonly used in this context, especially when performing NMF \citep{willwerscheidthesis,johnson2023inferring}. Specifically, for some fixed $c > 0$, these methods find the solution to 
\begin{gather}
    \min_{\mathbf{L} \in \mathbb{R}_{\geq 0}^{n \times K}, \mathbf{F} \in
\mathbb{R}_{\geq 0}^{p \times K}} ||\Tilde{\mathbf{Y}} - \mathbf{L}\mathbf{F}^\top ||_2 \label{eq:frob_nmf} \\
    \intertext{where}
     \Tilde{y}_{ij} = \log(1 + y_{ij} / c). \label{eq:log1p_trans} 
\end{gather}
This is equivalent to maximum likelihood estimation of a Gaussian NMF model on the transformed counts, assuming the same residual variance for each element of the matrix.

While there are both theoretical and practical concerns with applying Gaussian methods to count data, especially in the context of single-cell RNA sequencing \citep{nicol2024model, townes2019feature}, in practice it can sometimes lead to reasonable results with good performance in downstream tasks \citep{ahlmann2023comparison}. In Appendix \ref{app:approx_comp}, we investigate empirically how closely results from fitting \eqref{eq:frob_nmf}-\eqref{eq:log1p_trans} match those from log1p link Poisson NMF.  One theoretical disadvantage of the transformation approach is that the parameter $c$ controls both (a) the relationship between the latent factors and the underlying mean and (b) the variance stabilization properties of the transformation \citep{ahlmann2023comparison}. It is possible that the ``optimal'' value of $c$ for variance stabilization does not correspond with the desired value of $c$ for downstream interpretation, and thus there may be a trade-off between these two goals. Indeed, for very large settings of $c$ in equation \eqref{eq:log1p_trans}, $\Tilde{\mathbf{Y}} \approx \frac{1}{c}\mathbf{Y}$, which when solving equation \eqref{eq:frob_nmf} is equivalent to assuming that each element of $\mathbf{Y}$ is Gaussian with approximately the \textit{same} variance. In our log1p model $c$ simply controls the relationship between the factors and the underlying mean structure; the variance stabilization issue is avoided by direct use of the Poisson likelihood for the count data. While we focus on a Poisson sampling model because of its convenience and practical applicability, one could replace the Poisson likelihood \eqref{eq:log1p_nmf_y} with, for example, a negative-binomial model with additional dispersion parameters.

\section{Algorithms for fitting log1p NMF}

\subsection{Fitting model parameters with block coordinate ascent}

We take a maximum likelihood approach to fitting the log1p NMF model described in equations \eqref{eq:log1p_nmf_y}-\eqref{eq:log1p_nmf_B}. The log-likelihood of the log1p NMF model is
\begin{equation}
    \ell_{\textrm{log1p}}(\mathbf{L}, \mathbf{F}, c; \mathbf{Y}) = \sum_{i = 1}^{n}\sum_{j = 1}^{m} \left( y_{ij} \log\left[\exp\left\{ \frac{1}{\alpha_c} \sum_{k = 1}^{K} l_{ik} f_{jk} \right\} - 1 \right] - c \cdot \exp\left\{ \frac{1}{\alpha_c} \sum_{k = 1}^{K} l_{ik} f_{jk} \right\} \right), \label{eq:log1p_ll}
\end{equation}
where we have omitted constants with respect to $\mathbf{Y}$ and $c$ (since we treat $c$ as a fixed hyper-parameter). 

Maximizing $\ell_{\textrm{log1p}}(\mathbf{L}, \mathbf{F}, c; \mathbf{Y})$ with respect to $\mathbf{L}$ and $\mathbf{F}$ is a high-dimensional, non-convex optimization problem. However, as shown in Theorem \ref{theorem:biconcave} (see Appendix \ref{app:thm3}), $\ell_{\textrm{log1p}}(\mathbf{L}, \mathbf{F}, c; \mathbf{Y})$ with $c$ fixed is a bi-concave function of $\mathbf{L}$ and $\mathbf{F}$ (so $-\ell_{\textrm{log1p}}(\mathbf{L}, \mathbf{F}, c; \mathbf{Y})$ is bi-convex). This motivates an alternating optimization approach, which alternates repeatedly between optimizing for $\mathbf{L}$ with $\mathbf{F}$ fixed, and optimizing for $\mathbf{F}$ with $\mathbf{L}$ fixed. Conveniently, these sub-problems break down into a series of simpler tasks that can be performed in an embarrassingly parallel way, as we now describe. 

To begin this description, consider the following Poisson regression model with log1p link:
\begin{align}
y_{i} &\overset{\textrm{indep.}}{\sim} \textrm{Poisson}(\lambda_{i}) \notag \\
g(\lambda_i; c) &= \mathbf{x}_{i}^{\top} \boldsymbol{\beta}, \label{eq:log1p_nn_reg} 
\end{align}
where $\mathbf{y} \in \mathbb{N}^{N}_{0}$ is a vector of counts, $\mathbf{X} \in \mathbb{R}^{N \times q}_{\geq 0}$ is a fixed matrix of non-negative ``covariates'' with $i$th row $\mathbf{x}_i^\top$, and $\boldsymbol{\beta} \in  \mathbb{R}^{q}_{\geq 0}$ is an unknown vector of non-negative regression coefficients. This regression model has log-likelihood $\ell_{\textrm{log1pReg}}$, given by
\begin{equation}\label{eq:log1p_nn_reg_ll}
    \ell_{\textrm{log1pReg}}(\boldsymbol{\beta}, c; \boldsymbol{y}, \mathbf{X}) = \sum_{i = 1}^{N} \left( y_{i} \log\left\{\exp\left\{\mathbf{x}_{i}^{\top} \boldsymbol{\beta} / \alpha_c \right\} - 1 \right\} - c \cdot \exp\left\{ \mathbf{x}_{i}^{\top} \boldsymbol{\beta} /\alpha_c\right\} \right),
\end{equation}
which can also be shown to be concave in $\boldsymbol{\beta}$ by a similar argument to Theorem \ref{theorem:biconcave}.

The Poisson NMF model \eqref{eq:log1p_nmf_y}
can be considered as a series of regressions in two different ways: each column of $\mathbf{Y}$ is a regression on the columns of  $\mathbf{L}$ (with elements of $\mathbf{F}$ as the regression coefficients), or each row of $\mathbf{Y}$ is a regression on the columns of  $\mathbf{F}$ (with elements of $\mathbf{L}$ as the regression coefficients). More algebraically,
the log1p NMF log-likelihood \eqref{eq:log1p_ll} can be written as a sum of regression log-likelihoods in either of two ways: 
\begin{equation}\label{eq:fit_L}
    \ell_{\textrm{log1p}}(\mathbf{L},\mathbf{F}, c; \mathbf{Y}) = \sum\limits_{j = 1}^{m} \ell_{\textrm{log1pReg}}(\mathbf{f}_{j,:}; \mathbf{y}_{:,j}, \mathbf{L}) = \sum\limits_{i = 1}^{n} \ell_{\textrm{log1pReg}}(\mathbf{l}_{i,:}; \mathbf{y}_{i,:}, \mathbf{F}),
\end{equation}
where $\mathbf{y}_{:,j}$ (respectively  $\mathbf{y}_{i,:}$)  denotes the column vector containing the $j^{\textrm{th}}$ column (respectively $i^{\textrm{th}}$ row) of the matrix $\mathbf{Y}$.
Thus, with $\mathbf{L}$ fixed, optimizing over $\mathbf{F}$ involves solving $m$ independent
non-negative Poisson regression problems. Similarly, with $\mathbf{F}$ fixed, optimizing over $\mathbf{L}$ involves solving $n$ independent non-negative Poisson regression problems. Because these steps involve solving independent regression problems, they can be done in parallel (see Algorithm \ref{alg:cap}). Fitting each regression problem requires a numerical solver, for which we use coordinate ascent as described in Appendix \ref{app:ccd}. 
The resulting algorithm essentially extends the “Alternating Poisson Regression” approach \citep{carbonetto2021non,weine2024fast} used for other Poisson factor models to accommodate the shifted-log link function.

\begin{algorithm}[!t]
\caption{Alternating Optimization Method for Fitting Poisson log1p NMF Model. Row $i$
  and column $j$ of $\mathbf{y}$ are denoted, respectively, by $\mathbf{y}_{i,:}$ and $\mathbf{y}_{:,j}$.}
\label{alg:cap}
\begin{algorithmic}[1]

\Require Count data ${\boldsymbol Y} \in \mathbb{N}_{0}^{n \times m}$, initial
estimates $\mathbf{L} \in \mathbb{R}_{\geq 0}^{n \times K}$, $\mathbf{F} \in \mathbb{R}_{\geq 0}^{n \times K}$, constant $c > 0$, and a function $\mbox{\sc Pois-reg-log1p}(\mathbf{X}, \mathbf{y}, c)$ that returns the constrained MLE of $\boldsymbol{\beta}$ in a non-negative Poisson regression with the log1p link.

\While{not converged}

\For{$i = 1, \dots, n$}
\Comment{These can be performed in parallel.}

\State $\mathbf{l}_{i} \gets \mbox{\sc Pois-reg-log1p}(\mathbf{F}, \mathbf{y}_{i,:}^{\top}, c)$

\State Store $\mathbf{l}_i$ in the $i$th row of $\mathbf{L}$.

\EndFor

\For{$j = 1, \dots, m$}
\Comment{These can be performed in parallel.}

\State $\mathbf{f}_j \gets \mbox{\sc Pois-reg-log1p}(\mathbf{L}, \mathbf{y}_{:,j}, c)$

\State Store $\mathbf{f}_j$ in the $j$th row of $\mathbf{F}$.

\EndFor

\EndWhile

\State \Return $\mathbf{L}, \mathbf{F}$
\end{algorithmic}
\end{algorithm}

\subsection{Computational complexity and an approximation for sparse data}

The computational complexity of each outer-loop iteration of Algorithm \ref{alg:cap} is $\mathcal{O}(nmK)$, which scales linearly with the size, $nm$, of the data matrix $\mathbf{Y}$. For standard Poisson NMF, with sparse data matrices $\mathbf{Y}$, computation can be reduced to $\mathcal{O}(\omega K +(n +m)K)$ where $\omega$ denotes the number of non-zero entries in $\mathbf{Y}$; see \cite{carbonetto2021non}. Large sparse data matrices are ubiquitous in count data (e.g., single cell RNA-seq, document-term matrices), and $nm$ may be orders of magnitude larger than $\omega$. In such cases, despite its parallel nature, Algorithm \ref{alg:cap} may become computationally impractical, even though similar algorithms for standard Poisson NMF are practical.
To address this, we now introduce an approximate maximum likelihood algorithm for log1p NMF that scales with $\omega$ instead of $nm$, while remaining accurate.

To highlight the role of data sparsity in the log1p NMF log-likelihood, we re-write the log-likelihood \eqref{eq:log1p_ll} as
\begin{equation*}
    \ell_{\textrm{log1p}}(\mathbf{L}, \mathbf{F}, c; \mathbf{Y}) = \sum_{(i,j) \notin \mathcal{I}_{0}} y_{ij} \log\left(\exp\left\{ \frac{1}{\alpha_c}\sum_{k = 1}^{K} l_{ik} f_{jk} \right\} - 1 \right) - c\sum_{i=1}^{n}\sum_{j=1}^{m} \exp\left(\frac{1}{\alpha_c}\sum_{k = 1}^{K} l_{ik}f_{kj}\right), \label{eq:log1p_ll_first_sparse_term}
\end{equation*}
where $\mathcal{I}_{0} = \{(i, j): y_{ij} = 0\}$ is the index set of $0$ counts in the matrix $\mathbf{Y}$. Computing the first term scales with $\omega = nm - \lvert \mathcal{I}_0 \rvert$, while computing the second term is the computational bottleneck: it sums $n m$ exponential terms, each of which requires $K$ operations to compute.  

The reason that standard Poisson NMF can be made efficient for sparse data matrices is that this problematic ``sum of exponentials'' terms does not occur; instead there is a sum of linear terms, which can be computed efficiently in $\mathcal{O}((n+m) K)$. A simple way to make the log1p NMF computationally tractable would be to approximate the exponential terms with linear terms. Unfortunately, in general, this would yield a very bad approximation: $\exp(x)$ is accurately approximated by a linear function only for $x$ very close to 0. Therefore we improve this naive idea in two important ways. First, following previous work in approximate GLM / GBM inference \citep{huggins2017pass, zoltowski2018scaling, keeley2020efficient}, we use a quadratic approximation $\exp(x) \approx \eta_{0} + \eta_{1} x + \eta_{2} x^{2}$, where $\eta_{0}, \eta_{1},$ and $\eta_{2}$ are chosen either by Taylor approximation of $\exp(x)$ about some $x_{0}$, or by Chebyshev approximation over some interval $[x_{L}, x_{U}]$. Second, we approximate {\it only the terms in the sum corresponding to $y_{ij}=0$} (i.e.~$(i,j) \in \mathcal{I}_{0}$). The intuition is that for such terms the corresponding estimates of $\lambda_{ij}$ will typically be small (for $\mathbf{L,F}$ consistent with the data), and so the quadratic approximation will be accurate in the parts of the space that matter.

These two ideas, when combined, give the following approximate log-likelihood:
\begin{align}\label{eq:approx_log1p_ll}
    \ell_{\textrm{log1p}}(\mathbf{L}, \mathbf{F}, c; \mathbf{Y}) \approx & \sum_{(i,j) \notin \mathcal{I}_{0}} y_{ij} \log\left(\exp\left\{ \frac{1}{\alpha_c} \sum_{k = 1}^{K} l_{ik} f_{jk} \right\} - 1 \right) - c\sum_{(i,j) \notin \mathcal{I}_{0}} \exp\left(\frac{1}{\alpha_c}\sum_{k = 1}^{K} l_{ik} f_{jk}\right) \notag \\
& - \frac{\eta_{1}c}{\alpha_c} \sum_{(i,j) \in \mathcal{I}_{0}} \sum_{k = 1}^{K} l_{ik} f_{jk} - \frac{\eta_{2}c}{\alpha_c^{2}}\sum_{(i,j) \in \mathcal{I}_{0}} \left(\sum_{k = 1}^{K} l_{ik} f_{jk}\right)^{2}.
\end{align}
Computing the first two terms of equation \eqref{eq:approx_log1p_ll} requires $\mathcal{O}(\omega K)$ operations. While naively it appears that the subsequent linear and quadratic terms require $\mathcal{O}(\lvert \mathcal{I}_{0} \rvert K)$ operations, both terms can actually be computed much more efficiently by simple algebraic rearrangements (see Appendix \ref{app:comp_complexity} for more details). The total computational complexity of this approximate log-likelihood becomes
\begin{equation*}
\mathcal{O}\left((\omega + n + m)K + (n + m)K^{2}\right).
\end{equation*}

 Table \ref{tab:comp_complexity} summarizes and compares the computational complexity of computing the log-likelihood for different Poisson matrix factorization models on sparse data. As shown in Figure \ref{fig:comp_scaling}, for large, sparse data, especially with relatively small settings of $K$, our approximation can be orders of magnitude faster to compute than is the exact log-likelihood (as well as GLM-PCA, which has the same computational complexity as \eqref{eq:log1p_ll}; \cite{weine2024fast}). Moreover, for large, sparse data, our approximate log-likelihood is nearly as fast to compute as the log-likelihood of standard Poisson NMF. 


\begin{table}[t]
\centering
\renewcommand{\arraystretch}{1.4}
\setlength{\tabcolsep}{10pt}

\begin{tabularx}{\textwidth}{|l|X|}
\hline
\textbf{Model} & \textbf{Log-likelihood Complexity} \\
\hline
Standard Poisson NMF
& $\mathcal{O}((\omega + n + m)K)$ \citep{carbonetto2021non} \\
\hline
Poisson GLM-PCA
& $\mathcal{O}(nmK)$ \citep{weine2024fast} \\
\hline
Log1p Poisson NMF Exact
& $\mathcal{O}(nmK)$ \\
\hline
Log1p Poisson NMF Approximate
& $\mathcal{O}\!\left((\omega + n + m)K + (n + m)K^{2}\right)$ \\
\hline
\end{tabularx}

\caption{Computational complexity of log-likelihood for Poisson matrix factorization variants on sparse data matrices (size $n \times m$ with $\omega$ non-zero entries).}
\label{tab:comp_complexity}
\end{table}

\begin{figure}
    \centering
    \includegraphics[width=1.0\linewidth]{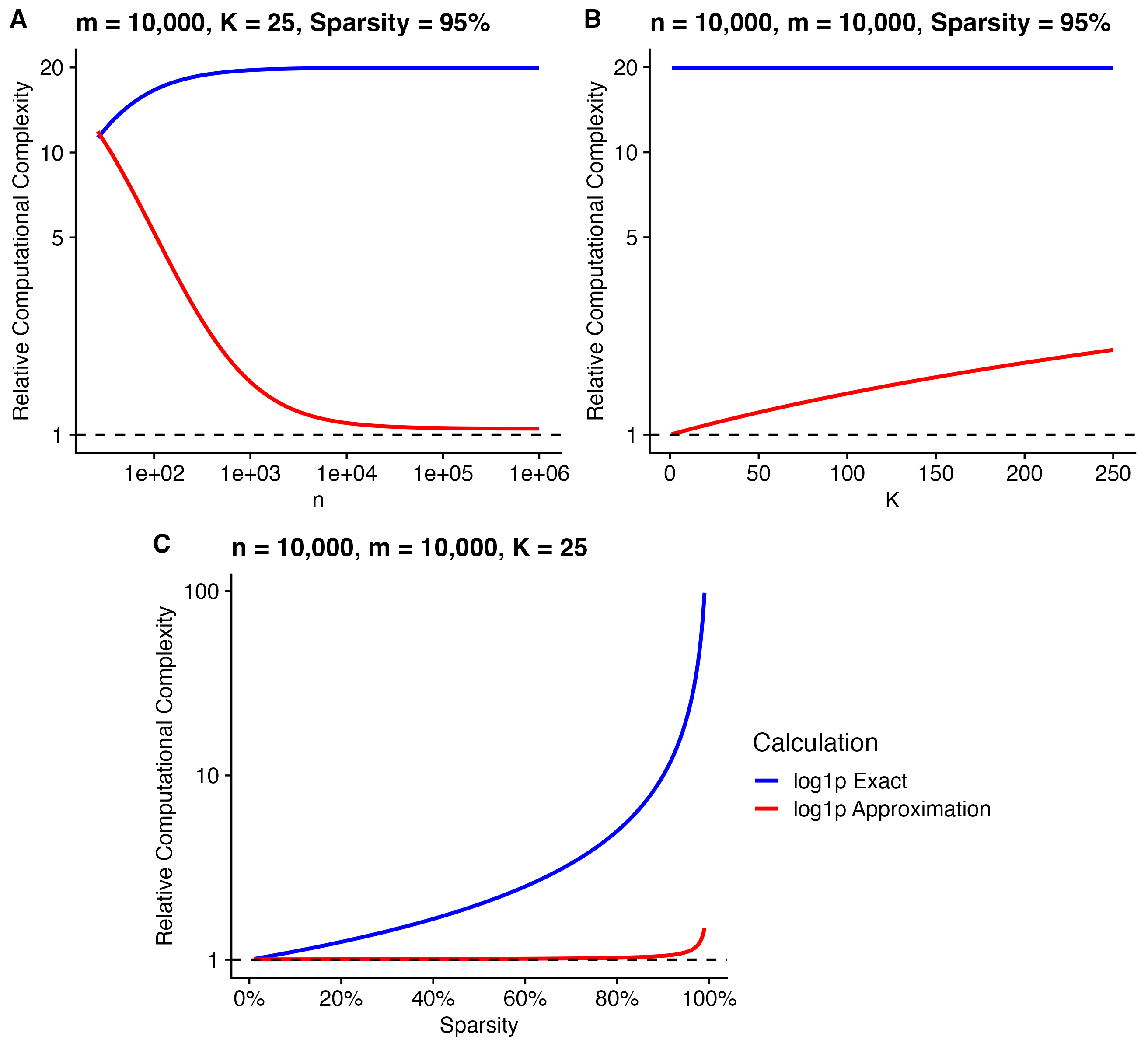}
    \caption{Scaling of computational complexity of equations \eqref{eq:log1p_ll} and \eqref{eq:approx_log1p_ll} with data characteristics and choice of $K$. The y-axes are the ratio of the computational complexity of the specified calculation relative to the complexity of computing the log-likelihood of the standard Poisson NMF model, where $1$ indicates the two calculations have the same complexity. (A) Scaling with respect to $n$ with all other variables fixed (note that this is equivalent to scaling with $m$). (B) Scaling with $K$ with all other variables fixed. (C) Scaling with sparsity $\left(\left[1 - \frac{\omega}{nm}\right] \cdot 100\%\right)$ with all other variables fixed.}
    \label{fig:comp_scaling}
\end{figure}

Using the approximate log-likelihood of equation \eqref{eq:approx_log1p_ll}, we can use the same computational approach as Algorithm \ref{alg:cap}. That is, we repeatedly optimize equation \eqref{eq:approx_log1p_ll} for $\mathbf{L}$ with $\mathbf{F}$ fixed, and for $\mathbf{F}$ with $\mathbf{L}$ fixed. Just as with the exact log-likelihood, these subproblems are concave and embarrassingly parallel.

\subsection{Accuracy of the sparse computational approximation}
To assess the accuracy of the approximate log-likelihood, we simulated data and examined the ratio of complete data likelihoods between the fitted models (using the exact and approximate optimization schemes). Specifically, we generated data with $n = p = 500$ and $K = 5$, from a log1p NMF model with values of $c = 10^{-3}$, $c = 1$, or $c = \infty$ (i.e., standard Poisson NMF), keeping the sparsity of the data at around $95\%$. Then, we fit log1p NMF to convergence using both the approximate and exact objective functions for a grid of settings of $c$, ranging between $10^{-4}$ and $10^4$. We set $\eta_{0}, \eta_{1}$, and $\eta_{2}$ using i) a second order Taylor approximation of $\exp(x)$ about $x = 0$ and ii) a Chebyshev approximation over the interval $[0, \log(1 + 1/c)]$ for the setting of $c$ used to fit the model. 

The results are shown in Figure \ref{fig:approx_quality}. Regardless of how the data were generated or the setting of $c$ in the fitted model, the Chebyshev approximation method performs very accurately. The Taylor approximation method does not perform very well for small values of $c$, likely because the range of the optimal value for $b_{ij}$ can still be quite large when $y_{ij} = 0$. More concretely, if for some $y_{ij} = 0$ the constrained MLE $\hat{\lambda}_{ij} = \varepsilon$, then the corresponding MLE of $b_{ij}$ is
\begin{equation*}
    \hat{b}_{ij} = \alpha_c\log\left(1 + \frac{\varepsilon}{c}\right).
\end{equation*}
Even when $\varepsilon$ is small, if $c$ is small relative to $\varepsilon$, then $\hat{b}_{ij}$ can be relatively large. This will make the Taylor approximation about $\exp(b_{ij})$ a poor approximation of the true log-likelihood near $\hat{b}_{ij}$, degrading accuracy. In principle, the Chebyshev approximation approach will also become less accurate when $c$ becomes very small, but at least in our simulations using the adaptive approximation interval of $[0, \log(1 + 1/c)]$ greatly improved performance.

\begin{figure}
    \centering
    \includegraphics[width=1.0\linewidth]{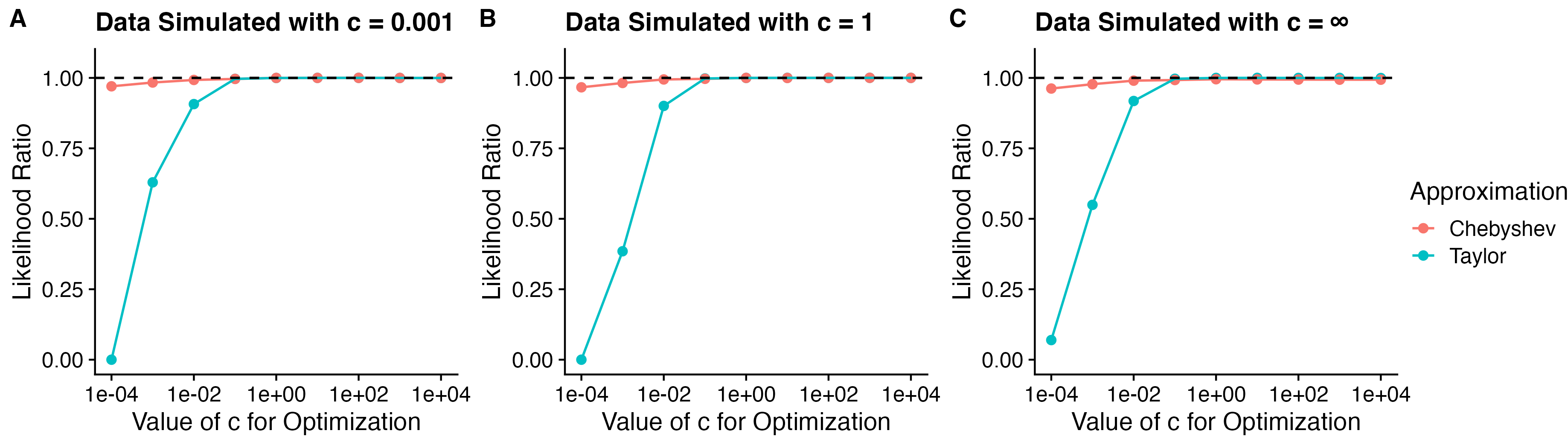}
    \caption{Likelihood ratio (likelihood of data when optimized with the approximate objective divided by likelihood of data when optimized with the exact objective) of factor models fit with $K = 5$ and varying settings of $c$. All data are generated with $n = m = 500$ and $K = 5$. (A) Likelihood ratios when data are generated from the log1p model with $c = 10^{-3}$. (B) Likelihood ratios when data are generated from the log1p model with $c = 1$. (C) Likelihood ratios when data are generated from the log1p model with $c = \infty$.}
    \label{fig:approx_quality}
\end{figure}

\subsection{Adding row-specific scaling constants}

In practical applications, different rows of the data matrix may have somewhat different scales. For example, in single-cell RNA sequencing, the total number of mRNA molecules captured in each cell (sometimes referred to as ``library size''), is thought to mostly be a result of technical randomness in the measurement process \citep{love2014moderated}. Or, in text data, the total number of words in a document is typically less interesting than the \textit{relative} term usage in a document. To capture this kind of effect we adapt the model \eqref{eq:log1p_nmf_link} to incorporate a fixed ``size factor'' $s_i$ for each row, replacing the assumption $y_{ij} \sim \textrm{Poisson}(\lambda_{ij})$ with $y_{ij} \sim \textrm{Poisson}(s_i \lambda_{ij})$ for fixed $s_i$ values. The resulting model is equivalent (in terms of estimating $\mathbf{L,F}$) to using a different value of $c$ for each row; specifically, it is equivalent to:
\begin{align}
y_{ij} &\sim \textrm{Poisson}(\lambda_{ij}) \notag \\
g(\lambda_{ij}; c s_i) &= b_{ij} \label{eq:log1p_nmf_with_s_reformulation} \\
\mathbf{B} &= \mathbf{L}\mathbf{F}^{\top}. \notag
\end{align}
Since $c \rightarrow 0$ implies that $cs_{i} \rightarrow 0$ and $c \rightarrow \infty$ implies that $c s_{i} \rightarrow \infty$, all results regarding the role of $c$ in equation \eqref{eq:log1p_nmf_y} also hold for the modified model \eqref{eq:log1p_nmf_with_s_reformulation}. In our applications we set the size factors on the order of $1$ so that the role of $c$ is not obfuscated by the scale of $s_{i}$. In particular, we set
\begin{equation*}
    s_{i} = \frac{\sum_{j = 1}^{m}y_{ij}}{\frac{1}{n}\sum_{i = 1}^{n}\sum_{j = 1}^{m}y_{ij}},
\end{equation*}
so that $s_{1}, \dots, s_{n}$ have mean $1$.

Note that using size factors $s_i=\sum_j y_{ij}$ is closely-connected to fitting a multinomial model to the data. This is because, with these size factors, the Poisson log-likelihood $y_{ij} \sim \text{Poisson}(s_i \lambda_{ij})$ is equivalent to the multinomial log-likelihood 
\begin{equation*}
    y_{i1}, \dots, y_{im} \lvert s_i \sim \textrm{Multinomial}(s_i; \lambda_{i1}, \dots, \lambda_{im}),
\end{equation*}
provided $\sum_j \lambda_{ij}=1$ \citep{baker1994multinomial}.
Furthermore, for standard Poisson NMF it turns out that the constraint $\sum_j \lambda_{ij}=1$ is automatically satisfied by the maximum likelihood estimates under the Poisson model, and so the multinomial model can be fit by simply fitting the Poisson model, without imposing the constraint directly \citep{carbonetto2021non}. In the log1p model this constraint is no longer guaranteed to be satisfied, so fitting the modified log1p Poisson model is not exactly equivalent to fitting a multinomial model with log1p link (although it could be viewed as an approximation to this). We leave investigation of methods for exactly fitting the multinomial model with log1p link to future work.




\subsection{Initialization}

To initialize the log1p model with $K$ factors, we first optimize a rank-$1$ model to the data and then we initialize other entries to small positive numbers. That is, we fit each model in two steps:
\begin{enumerate}
    \item Fit the log1p NMF with $K = 1$ to $\mathbf{Y}$ using Algorithm \ref{alg:cap} with random initializations of $\mathbf{L}$ and $\mathbf{F}$. This yields initial estimates $\hat{\boldsymbol{l}}$ and $\hat{\boldsymbol{f}}$.
    \item Fit the log1p NMF with $K$ factors to $\mathbf{Y}$ using Algorithm \ref{alg:cap} with  initializations of $\mathbf{L} = [\, \hat{\boldsymbol{l}} \; \boldsymbol{u}_1 \; \dots \: \boldsymbol{u}_{K-1} \,]$ and $\mathbf{F}= [\, \hat{\boldsymbol{f}} \; \boldsymbol{v}_1 \; \dots \: \boldsymbol{v}_{K-1} \,]$, where $\boldsymbol{u}_1, \dots, \boldsymbol{u}_{K-1}$ and $\boldsymbol{v}_1, \dots, v_{K-1}$ are column vectors with very small, (random) positive numbers in each entry.
\end{enumerate}
This initialization procedure was used to encourage the fitted models to find a ``baseline'' factor (i.e., a factor that has a loading of $1$ on all samples), which can sometimes improve interpretability. We discuss this matter further in the Applications section.

\subsection{Re-scaling inferred loadings and factors}

As with most matrix factorization models, there is some inherent non-identifiability in the scale of  the estimates of $\mathbf{L}$ and $\mathbf{F}$. In particular, one can multiply each element of $\mathbf{l}_k$ by any constant $a_k$, and divide each element of $\mathbf{f}_k$ by the same constant, and the resulting likelihood will be unchanged (because $\mathbf{L}\mathbf{F}^\top$ is unchanged). Thus, before plotting or comparing results from log1p models with different settings of $c$ (or different runs of the algorithm with the same $c$) it is important to scale the factors and loadings in some standardized way. 

Here, after obtaining estimates of $\mathbf{L}$ and $\mathbf{F}$, we scale each column of these matrices such that $\max_i l_{ik} = 1$ (i.e.,~we rescale using $a_k = 1/\max_i l_{ik}$). If one thinks of 
$l_{ik}$ as representing the ``membership'' of sample $i$ in factor $k$ then this corresponds to assuming that the maximum membership in each factor is 1. 
This also means that the interpretation of $f_{jk}$ from \eqref{eqn:lf_interpretation} is relative to ``maximal membership'' in factor $k$. Thus, for example, as $c \rightarrow \infty$, $f_{jk}$ represents the additive change in gene expression associated with full membership in factor $k$, and as $c\rightarrow 0$, $f_{jk}$ represents the log-fold change in gene expression associated with full membership in factor $k$.

We note that the scaling, while not affecting the fit, does affect visualization of the results. With our scaling, every column of $\mathbf{L}$ will show up somewhat equally in plots of the loadings (since each column has a maximum value of $1$), no matter how strong the impact of that factor on the data (i.e., how large the corresponding column of $\mathbf{F}$ is). This can be useful for highlighting subtler structure, since factors that have a small effect on the data will still be visible in the plot. However, in some settings it might be preferable to down-weight factors with small effects in the visualization (which could be achieved, for example, by using  $a_k = \max_j f_{jk}$ or $a_k = \sum_j f_{jk}$).

\section{Applications}

\subsection{MCF-7 bulk RNA-seq}
As a simple initial real-data example, we examine bulk RNA-sequencing data from the human breast carcinoma cell line MCF-7 under four conditions \citep{sanford2020gene}. This dataset is particularly well suited for evaluating log1p NMF across values of $c$, as it was explicitly generated to probe whether the effects of multiple treatments on gene expression combine additively or multiplicatively. Briefly, MCF-7 cells were either treated with all-trans retinoic acid (RA), transforming growth factor beta (TGF-$\beta$), or their combination at various concentrations. Some cells were also kept as a control and treated with ethanol (EtOH). In total, mRNA expression across $41$ samples ($10$ EtOH, $11$ RA, $9$ TGF-$\beta$, $11$ RA $+$ TGF-$\beta$) was measured. After filtering down to mRNA corresponding to protein-coding genes, and excluding any genes that were not detected in at least $4$ samples, we obtained a $41 \times 16{,}733$ count matrix with approximately $9\%$ $0$ entries. 

Figure \ref{fig:mcf7} shows the results of fitting the log1p model with $c=1$ and $c = \infty$ (i.e., standard Poisson NMF) to the MCF-7 data with rank $3$. The inferred  ``sample scores'' ($\mathbf{L}$)  are shown in Figure \ref{fig:mcf7} A,B. Although different, the two model fits share some key similarities: both use one factor to capture treatment with RA (k2, orange), and another to capture treatment with TGF-$\beta$ (k3, blue). Both also use a factor to capture the control condition (k1, black). However, for $c=1$ all samples have a high score on this factor, so it effectively acts as a ``baseline'' factor,  whereas for $c=\infty$ the sample scores decrease  in the single-treatment groups, and are close to 0 in the double-treatment (RA$+$TGF-$\beta$) group.

The gene scores ($\mathbf{F}$) for the treatment-related factors (k2, k3) in each model are shown in Figure \ref{fig:mcf7}C,D. It is immediately visually apparent that the gene scores for the two factors are more correlated under $c=\infty$ than for $c=1$ (Spearman $\rho \approx 0.91$ vs. $0.67$). Coloring the genes according to which ones are differentially expressed in each treatment vs. control (using {\it DESeq2}; \cite{love2014moderated}) shows that the genes with highest scores for $c=1$ in each factor are generally identified as differentially expressed by {\it DESeq2}, consistent with the interpretation of the samples scores for this model as capturing treatment effects.  

One way to view these results is that the $c=1$ results separate out a baseline factor
from the treatment factors, whereas the $c=\infty$ results absorb some of the baseline into all three factors, which causes the gene scores for different factors to be highly correlated, and more focused on the highest-expressed genes.
Note that this happens for $c=\infty$ even though we used an initialization strategy that might encourage it to separate out a baseline factor.
The results for $c=\infty$ are not ``wrong'' and the sample scores do capture the treatment structure in the data. However, by separating out a baseline factor, the $c=1$ results make it easier to identify the ``key genes'' that are responding to each treatment, simply by looking at which genes have the highest gene scores in each factor (see Table in Figure \ref{fig:mcf7}). For example, the top two genes in factor k2, \textsl{CYP26B1} and \textsl{CYP26A1}, are the main enzymes responsible for metabolizing RA in the human body \citep{topletz2012comparison}, and the next two genes with the highest scores, \textsl{SLC5A5} and \textsl{STRA6}, have been previously implicated in cellular response to RA: \textsl{STRA6} is believed to be the main protein responsible for transport of RA across cell membranes \citep{kelly2015stra6}, and \textsl{SLC5A5} is known to be up-regulated in response to RA in MCF-7 \citep{kogai2000retinoic}. 
  For $c=\infty$ the genes with the highest scores are very similar across factors, and tend to be genes that are highly expressed across all samples.
It is possible that a more sophisticated approach to identifying ``key genes'' could help here; e.g., ~see \cite{carbonetto.gomde}.

\setlength{\tabcolsep}{2.5pt}
\begin{figure}[htbp]
    \centering
    \begin{minipage}{\linewidth}
      \centering
      \includegraphics[width=0.875\linewidth]{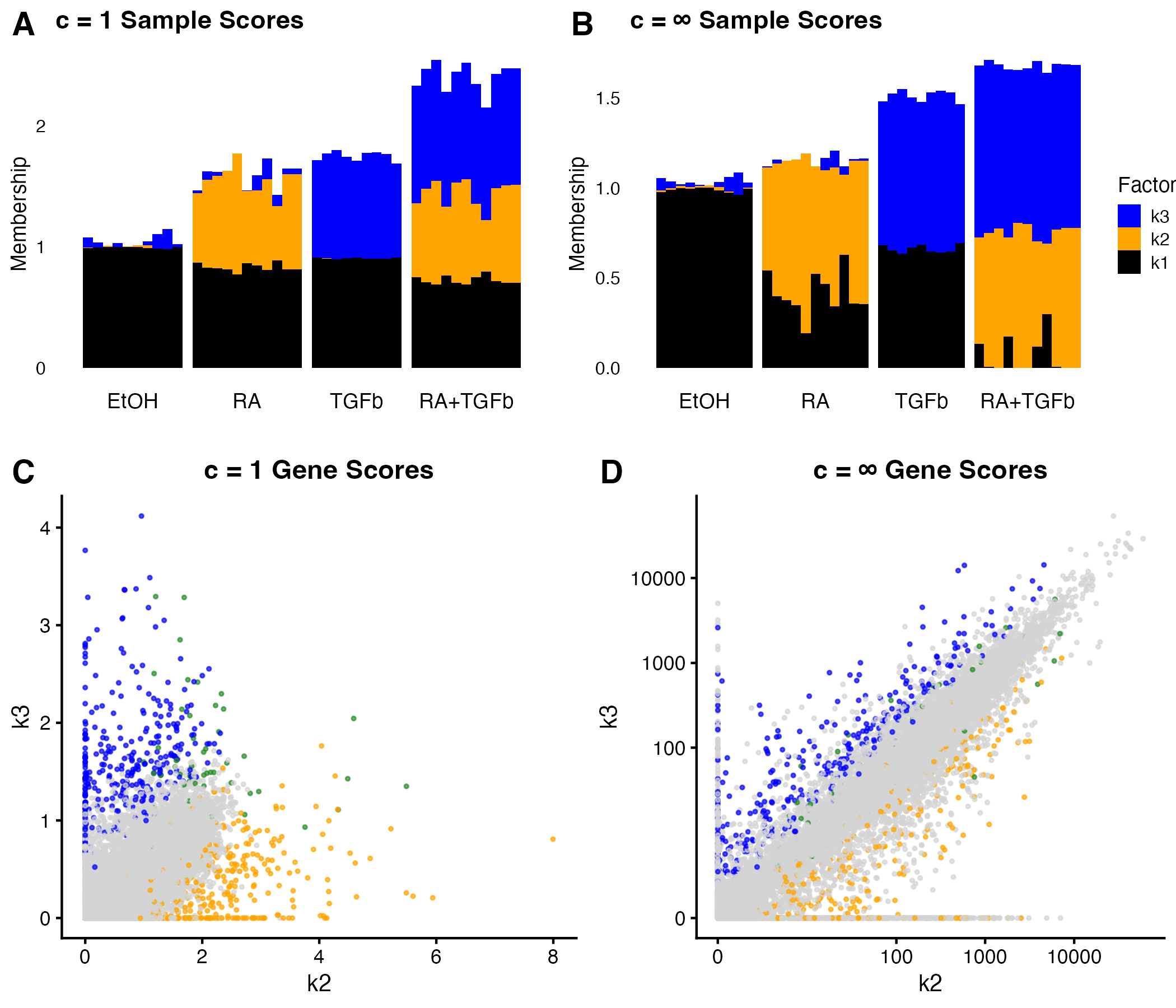}
    \end{minipage}
    
    \begin{minipage}{\linewidth}

\centering
\begin{tabularx}{\textwidth}{l I I}
\toprule
Factor & { \normalfont Top Genes -- $c = 1$} & {\normalfont Top Genes -- $c = \infty$}\\
\midrule
k1 & KRT8, COX1, ND4, ND5, ATP6, CYTB, COX2,
COX3, ACTB, EEF1A1 &
\cellcolor{gray!10}{KRT8, COX1, ND4, ND5, ATP6, CYTB, COX2,
COX3, ACTB, EEF1A1} \\
k2 & \cellcolor{gray!10}{CYP26B1, CYP26A1, SLC5A5, STRA6,
SHROOM1, HOXA1, LCN2, KDR, PTPRH,
SERPINA3} &
ND4, CYTB, ATP6, ND5, COX1, COX2,
EEF1A1, COX3, ACTB, PABPC1 \\
k3 & TP63, GABRP, TCIM, SPOCK1, DOCK4, MGP,
KIAA2012, COL4A3, NCF2, PTPRB &
\cellcolor{gray!10}{KRT8, COX1, ND4, ACTB, EEF1A1, ACTG1,
COX2, EEF2, ND5, ATP6} \\
\bottomrule
\end{tabularx}
    \end{minipage}
    
    \caption{Combined figure and table for the MCF-7 analysis. (A-B) Visual representation of fitted $\mathbf{L}$ matrices for the topic model and log1p NMF with $c = 1$. Each column represents a row of $\mathbf{L}$, where each color corresponds to a column of $\mathbf{L}$.  (C-D) Scatterplots of factors $2$ and $3$. Each point corresponds to a single gene (row of $\mathbf{F}$).  Points are colored based on results of differential expression using DESeq2 \citep{love2014moderated}. Points in green have Benjamini-Hochberg \citep{benjamini1995controlling} adjusted p-values $< 0.01$ and log2FC $> 1$ in both the RA and TGF-$\beta$ groups. Points in orange meet these conditions in only the RA group, points in blue meet these conditions only in the TGF-$\beta$ group, and points in grey meet these conditions in neither group. (Table): Top $10$ genes in each column of $\mathbf{F}$ from the fitted models.}
    \label{fig:mcf7}
\end{figure}

\subsection{Murine pancreas single cell RNA-seq data}

Our second example is a more complex single cell RNA-seq dataset, derived from murine pancreas cells stimulated with cytokines \citep{stancill2021single}. Briefly, cells isolated from the pancreas of eight mice were first pooled and then separated into four samples. One sample was treated with interleukin-1 beta (IL-1$\beta$), another was treated with interferon gamma (IFN$\gamma$), a third was treated with both IL-1$\beta$ and IFN$\gamma$, and a final sample was left untreated. After filtering for cells with between $2000$ and $60000$ unique molecular identifiers (UMIs), removing cells with greater than $10\%$ of UMIs coming from mitochondrial genes, removing genes expressed in fewer than $3$ cells, and removing mitochondrial genes, genes coding for ribosomal proteins, and the gene \textsl{Malat1}, we obtained a $7{,}606 \times 18{,}195$ (cells $\times$ genes) count matrix with approximately $82\%$ $0$ counts. These data contain eight different ``cell types'' (``acinar'', ``ductal'', ``endothelial/mesenchymal'', ``macrophage'', ``alpha'', ``beta'',``delta'', and ``gamma'' cells, with labels assigned based on the marker genes used 
in \cite{stancill2021single}), and we are interested in how the different NMF models capture this cell type structure in addition to the effects of cytokine treatment.

Setting $K = 13$, we fit the log1p NMF model with $c = 1$ and $c = \infty$. Both models produced some cell type-related factors and some treatment-related factors, with little overlap between these sets (the exception being factor k10 for $c=\infty$). Figure \ref{fig:pancreas_structure_c} shows the cell scores ($\mathbf{L}$) for the cell type-related factors and Figure \ref{fig:pancreas_structure_t} shows them for the treatment-related factors, and we now discuss these results in turn.  

Examining the cell type-related factors,
both models show clear differences among cell types, but the two representations are nonetheless quite different. In broad terms, the $c=\infty$ results are more ``clustered'', with each cell type being associated with one or two factors, whereas the $c=1$ results are more ``modular'', with several cell types being represented as a combination of three or more factors. If one is primarily interested in clustering the cells into cell types then $c=\infty$ results may appear cleaner. On the other hand, if one is interested in understanding the underlying processes that define cell types, and which processes are shared among cell types, the $c=1$ results may be more useful. For example, in $c=1$ the factor k8 (red) is strongly present in most alpha, delta and gamma cells, and also present in beta cells. Similarly, the factor k13 is strongly present in gamma cells, but also present in delta and alpha cells. These factors therefore represent processes that are shared across these cell types (all of which are islet cells). 

To further illustrate these differences, we consider in more detail the  delta and gamma cells, which are 
 transcriptionally very similar, with just a few genes -- most notably \textsl{Ppy}, \textsl{Sst}, and \textsl{Rbp4} -- showing strong differences in expression  (Figure \ref{fig:lsa_dg}C). The \textsl{Ppy} gene is the canonical marker gene for gamma cells (which, indeed, are also called PP cells; \cite{Inzani_Rindi_Serra_2000}), while \textsl{Sst} and \textsl{Rbp4} are canonical markers for delta cells in the murine pancreas \citep{thielert2025decoding}. The $c=\infty$ model essentially assigns a factor to each cell type (k3, k13) with these factors having correlated gene scores (Figure \ref{fig:lsa_dg}B) and the key genes lying away from the strong main diagonal. The $c=1$ model also captures the difference between the cells with two factors (k3,k13) but these factors show much less correlation (no strong diagonal) and focus primarily on the key genes. And k13, while strongest in gamma cells, is also present in other cells, again perhaps highlighting some shared processes. At a high level, the $c=\infty$ model captures similarity/differences between the gamma and delta cells by using two similar (but different) factors, whereas the $c=1$ model captures it by their shared membership in rather different factors. This type of behavior explains why, in Figure \ref{fig:pancreas_structure_c}, the cell scores for $c=\infty$ more clearly delineate cell types, whereas those for $c=1$ better convey which cell types are similar to one another.
 
Besides this high-level difference between the $c=1$ and $c = \infty$ fits, we note two other differences between the results. First, the $c=\infty$ results suggest a gradient of variation across $\alpha$ cells that are not evident in $c=1$. This is due to a factor (k10) that is also related to  treatment, and is discussed further below. 
Second, the $c=\infty$ results highlight a gradient of variation in the endothelial and 
mesenchymal cells (factor k9 vs. k12) that is absent or less evident in the $c=1$ results.  The top genes in these two factors include genes specifically related to endothelial cells (e.g.,  \textsl{Igfbp7}; \cite{van2012proteomic, he2024igfbp7}) and to mesenchymal cells (e.g., \textsl{Vim}; \cite{usman2021vimentin}) and this gradient may be related to the documented endothelial to mesenchymal transition \citep{piera2019endothelial}.  

Turning now to the treatment-associated factors (Figure \ref{fig:pancreas_structure_t}) the $c=1$ results (panel A) are somewhat analogous to our first data example above: one factor (k10) captures treatment with IL-1$\beta$ and another factor (k4) captures treatment with IFN$\gamma$; samples treated with both IL-1$\beta$ and IFN$\gamma$ show membership in both these factors. Many of the key genes for these treatment-associated factors are biologically connected with the treatments. For example, the top gene in factor k4 (associated with IFN-$\gamma$ treatment) is \textsl{Cxcl10}, which is also known as interferon gamma induced protein 10 \citep{liu2011cxcl10}, and the top genes in factor k10 include several genes known to be regulated by IL-1$\beta$ (e.g., \textsl{Lcn2}; \cite{hu2015lipocalin}, \textsl{Cebpd}; \cite{moore2012transcription}, \textsl{Cxcl1}; \cite{diana2014macrophages}).

In comparison, for $c=\infty$, the treatment associated factors are harder to interpret (Figure \ref{fig:pancreas_structure_t}B). One factor (k10) is specific to IL-1$\beta$ treatment, but it is also exclusive to alpha cells (Figure \ref{fig:pancreas_structure_t}D). As a result, the top gene in this factor (by far) is \textsl{Gcg}, which is a canonical marker of alpha cells \citep{stancill2021single}, and unlikely to be specifically related to treatment. The factor most associated with IFN$\gamma$ treatment (k7) is also present in untreated cells and in cells treated with IL-1$\beta$, but largely absent from the cells treated with both cytokines, making it hard to interpret in terms of the treatments. 
One possible reason for these results is that the effects of cell type and treatment in these data may combine more multiplicatively than additively, and thus align better with $c=1$ than $c=\infty$.

\setlength{\tabcolsep}{2.5pt}
\begin{figure}[htbp]
    \centering
    \begin{minipage}{\linewidth}
      \centering
      \includegraphics[width=1.0\linewidth]{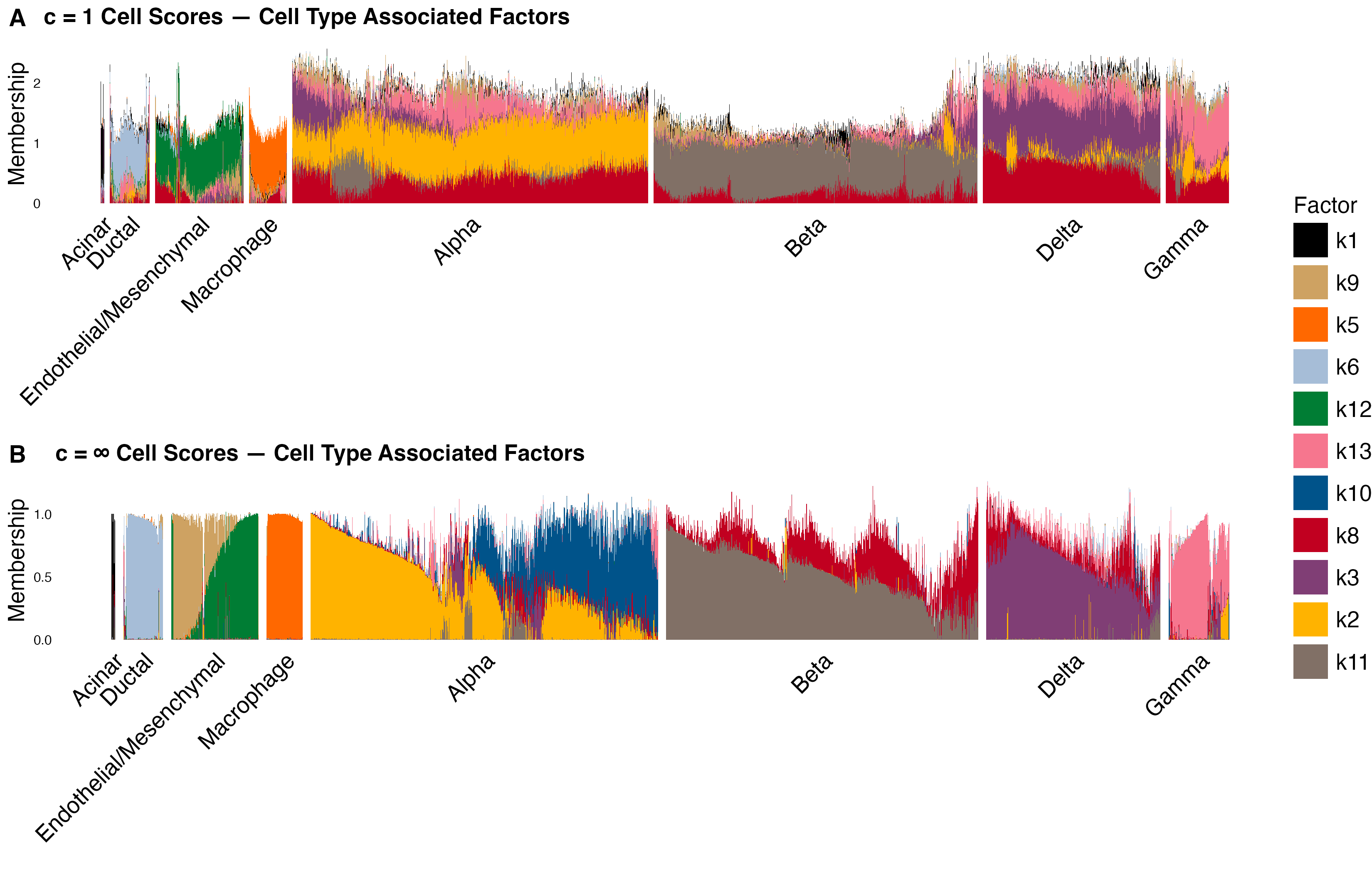}
    \end{minipage}
    
    \begin{minipage}{\linewidth}
\centering
{\footnotesize
\begin{tabularx}{\textwidth}{l I I}
\toprule
Factor & { \normalfont Top Genes -- $c = 1$} & {\normalfont Top Genes -- $c = \infty$}\\
\midrule
k1 & Ctrb1, Prss2, Try5, Reg1, Cela1, Cela3b,
Cpa1, Try4, Reg3b, Ins2 & \cellcolor{gray!10}{Ctrb1, Prss2, Try5, Reg1, Cela1, Clu,
Cela3b, Cpa1, Reg3b, Try4}\\
k2 & \cellcolor{gray!10}{Gcg, Ttr, Spp1, Pyy, Gpx3, Rbp4, Tmem27,
Gnas, Gc, Higd1a} & Gcg, Pyy, Ttr, Tpt1, Gnas, Chga, Tmem27,
Eef1a1, Resp18, Spp1\\
k3 & Sst, Pyy, Ppy, Rbp4, Arg1, Clu, Chgb,
Ins1, Fam159b, Cd24a & \cellcolor{gray!10}{Sst, Pyy, Iapp, Chgb, Rbp4, Tpt1, Gnas,
Resp18, Scg2, Meg3}\\
k5 & \cellcolor{gray!10}{Fth1, Apoe, Tmsb4x, Ftl1, Lgals3, Ctsb,
Actb, Prdx1, C1qb, Fcer1g} & Fth1, Ftl1, Tmsb4x, Apoe, Lgals3, Actb,
Ctsb, Mt1, Prdx1, Psap\\
k6 & Clu, Spp1, Lcn2, Tmsb4x, Krt8, Krt18,
Eef1a1, Cxcl5, Tpt1, Epcam & \cellcolor{gray!10}{Spp1, Clu, Lcn2, Eef1a1, Tpt1, Tmsb4x,
Krt18, Krt8, Actg1, Actb}\\
\addlinespace
k8 & \cellcolor{gray!10}{Iapp, Tpt1, Scg2, Ubb, Cd63, Resp18,
Ssr4, Eef1a1, Scg5, Chgb} & Iapp, Chga, Tpt1, Chgb, Eef1a1, Scg2,
Resp18, Cd63, Cpe, Pcsk2\\
k9 & Ghrl, Ppy, Hspa1a, Mt1, Gcg, Mgp,
Hspa1b, Mt2, Mif, Bnip3 & \cellcolor{gray!10}{Tpt1, Tmsb4x, Eef1a1, Cxcl10, Fth1,
Ccl2, Vim, Actb, Cxcl1, Spp1}\\
k11 & \cellcolor{gray!10}{Ins2, Ins1, Iapp, Chga, Ftl1, Tpt1, Ubb,
Eef1a1, Scg2, Resp18} & Ins2, Ins1, Iapp, Chga, Ftl1, Ubb,
Resp18, Scg2, Pcsk1n, Tpt1\\
k12 & Mgp, Igfbp7, Igfbp5, Tmsb4x, Sparc, Vim,
Ifitm3, Eef1a1, Tpt1, Ccl2 & \cellcolor{gray!10}{Mgp, Tpt1, Igfbp7, Eef1a1, Igfbp5,
Ifitm3, Cxcl10, Actb, Tmsb4x, Ptma}\\
k13 & \cellcolor{gray!10}{Ppy, Pyy, Ins2, Spp1, Chga, Tspan8,
Resp18, Fth1, Pcsk1n, Clu} & Ppy, Pyy, Tpt1, Resp18, Eef1a1, Ubb,
Cst3, Chgb, Scg2, Spp1\\
\bottomrule
\end{tabularx}
}
    \end{minipage}
    
    \caption{Combined figure and table for the cell type associated factors of the Pancreas analysis. (A-B) Visual representation of fitted $\mathbf{L}$ matrices for the log1p NMF with $c = 1$ and $c = \infty$, grouped by cell type. Each column represents a row of $\mathbf{L}$, where each color corresponds to a column of $\mathbf{L}$. (Table) Top genes for the factors of each model, excluding treatment associated factors.}
    \label{fig:pancreas_structure_c}
\end{figure}

\begin{figure}
    \centering
    \includegraphics[width=1.0\linewidth]{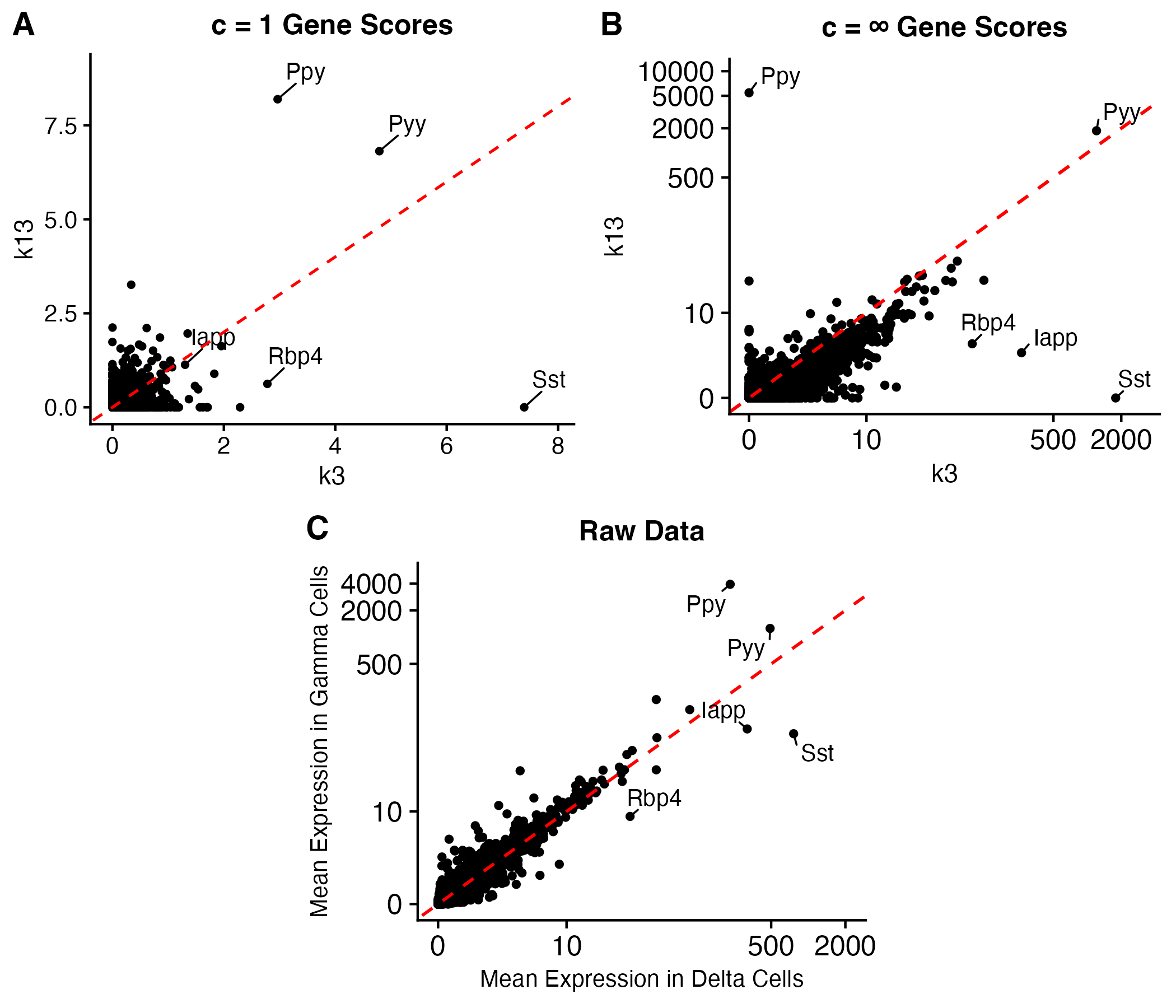}
    \caption{Comparison between delta and gamma cells in single cell pancreas data. (A-B) Gene scores ($\mathbf{F}$) from $c = 1$ and $c = \infty$ models corresponding to the factor most associated with delta cells (k3) and the factor most associated with gamma cells (k13). Before plotting, each matrix $\mathbf{L}$ was normalized so that the maximum value of each column was $1$, and $\mathbf{F}$ was scaled accordingly. (C) Mean expression of genes in delta and gamma cells.}
    \label{fig:lsa_dg}
\end{figure}

\setlength{\tabcolsep}{2.5pt}
\begin{figure}[htbp]
    \centering
    \begin{minipage}{\linewidth}
      \centering
      \includegraphics[width=0.875\linewidth]{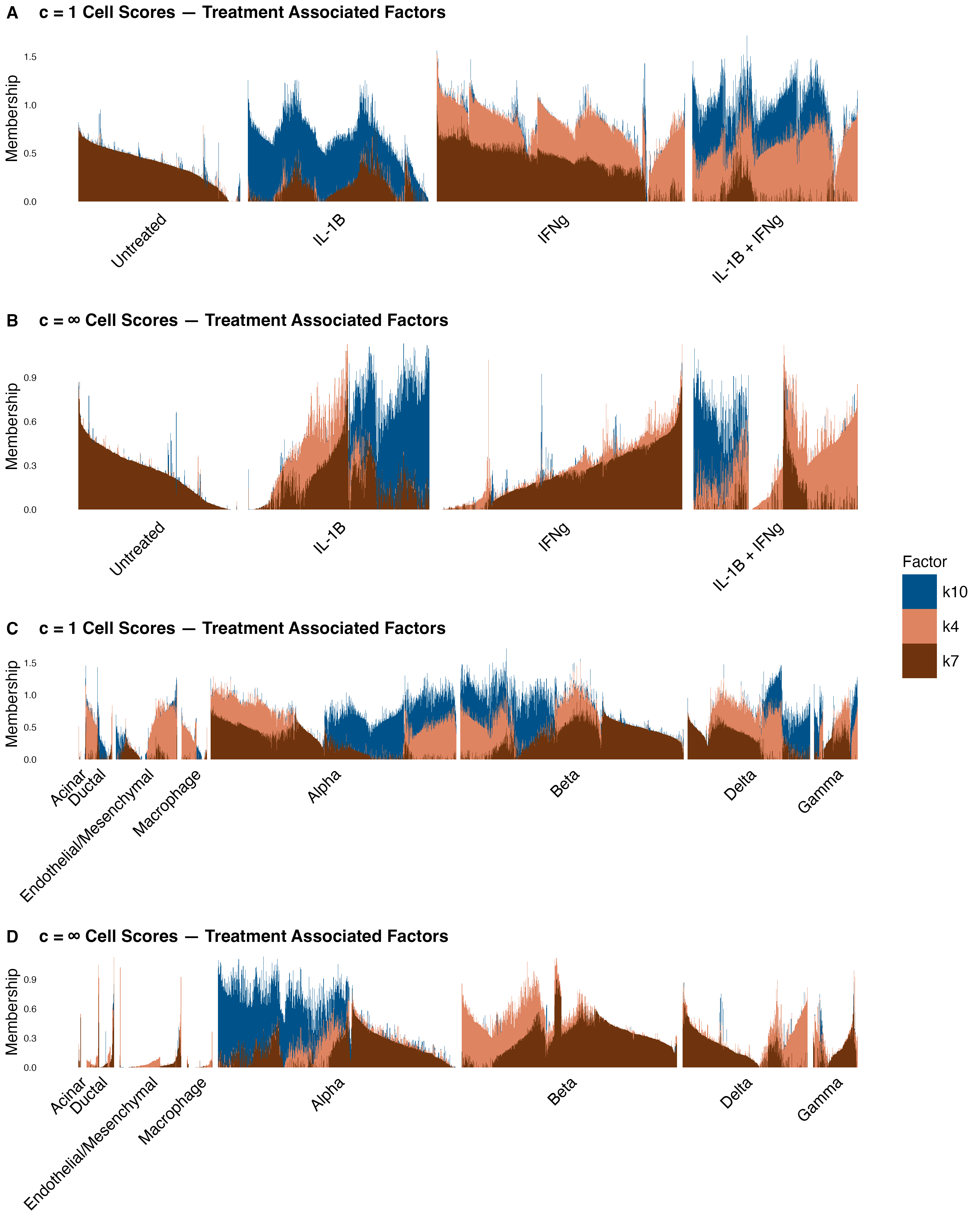}
    \end{minipage}
    
    \begin{minipage}{\linewidth}
\centering
{\footnotesize
\begin{tabularx}{\textwidth}{l I I}
\toprule
Factor & { \normalfont Top Genes -- $c = 1$} & {\normalfont Top Genes -- $c = \infty$}\\
\midrule
k4 & Cxcl10, Gbp2, Iigp1, Gbp4, Igtp, Gbp7,
Gbp3, Irgm1, Stat1, Cd274 & \cellcolor{gray!10}{Ins2, Mt1, Tpt1, Eef1a1, Mt2, Cxcl10,
Gbp2, Actg1, Hspa8, Fth1}\\
k7 & \cellcolor{gray!10}{Kcnq1ot1, Acly, Atp2a2, Peg3, Eef1a1,
Zbtb20, Ccnd2, Gm42418, Gatsl2, Zdhhc2} & Tpt1, Eef1a1, Ftl1, Ins2, Chga, Calr,
Gm42418, Fau, Cd63, Eif1\\
k10 & Defb1, Lcn2, Mt1, Mt2, Cebpd, Sod2,
Cxcl1, Steap4, Ccl2, Pabpc1 & \cellcolor{gray!10}{Gcg, Spp1, Tpt1, Ttr, Pyy, Eef1a1, Gnas,
Resp18, Hamp, Pcsk2}\\
\bottomrule
\end{tabularx}
}
    \end{minipage}
    
    \caption{Visualization of the treatment associated factors of the Pancreas analysis. (A-B) Cell scores grouped by treatment. (C-D) Cell scores grouped by cell type. (Table) Top genes for the factors of each model, excluding cell type associated factors.}
    \label{fig:pancreas_structure_t}
\end{figure}

\subsection{BBC News data}

Finally, we analyze a text dataset of news articles collected from the BBC between $2004$ and $2005$ \citep{greene2005producing}. Each article is labelled based on its editorial category in the BBC publication as one of the following: ``business'', ``entertainment'', ``politics'', ``sports'', or ``tech''. After removing SMART stop words \citep{smart}, stripping punctuation, converting words to lower case, removing numbers, stemming the words using Porter's algorithm \citep{pstem}, and removing any word that was not used in at least $5$ articles, we created a document-term matrix of size $2{,}127 \times 5{,}861$. Approximately $98\%$ of entries in this matrix were $0$. 

For these data, results for $c=1$ looked somewhat similar to $c=\infty$, so to better highlight differences between small and large $c$ we focus on comparing $c=0.001$ vs. $c=\infty$ (Figure \ref{fig:bbc_structure}). Results for a range of values of $c$ are shown in Appendix \ref{app:add_figs} (Figure \ref{fig:supp_bbc}). 

Examining the document scores ($\mathbf{L}$), $c=\infty$ again produces a relatively ``clustered'' representation, with each document having strong membership in just one or two components, whereas $c=0.001$ produces a much more modular representation, with many documents having membership in multiple components. Comparing the top words (``keywords'') in each component, the two fits capture some similar topics with highly overlapping keywords (Figure \ref{fig:bbc_structure}, Table). For example, both fits identify a politics topic (k1, whose shared keywords include ``labour'', ``tori(es)'', ``blair'', ``vote'', and ``tax''); a business topic (k6, with shared keywords ``market'', ``price'', ``growth'', etc); and an entertainment topic (k2, with shared keywords ``film'', ``actor'', ``oscar'', ``nomine(es)'', ``actress''). However, there are some differences in keywords that reflect the additive vs. multiplicative behavior  of the two models: for example, in the entertainment topic (k2) the keywords for $c=0.001$ results place greater emphasis on rarer proper nouns (e.g., ``dicaprio'', ``foxx''). These proper nouns are not close to the most used words in entertainment articles (``dicaprio'' and ``foxx'' are the $317^\textrm{th}$ and $162^\textrm{nd}$ most used words among all entertainment articles, respectively), but they are \textit{relatively} much more frequent in entertainment documents than in other documents (indeed, neither ``dicaprio'' nor ``foxx'' occur in documents in other categories).

The difference between a more clustered vs modular/layered representation is nicely illustrated by the different ways the two models represent heterogeneity among sports articles. The $c=\infty$ results essentially divides these articles into three distinct clusters (k5, k7, k10), which from the keywords seem to correspond roughly to rugby, football (soccer), and other sports. In contrast the $c=0.001$ results yield one factor (k5) that almost every sports article is loaded on, and heterogeneity among sports articles is captured by loadings on a variety of additional components (e.g., k7 captures the soccer-related articles). Some of these additional components seem to be doing ``double-duty'': for example, the keywords of k8 includes ``athelet(e)'' and ``olymp(ic)'', but also ``music'', ``hip'', ``hop'' and ``soul''. The existence of such double-duty components may be a side-effect of the more multiplicative nature of the model, where the effect of a component can depend very much on what it is being added to: intuitively, adding component k8 to other components that already contain music-related keywords will most increase the fitted values of those music-related words, whereas adding it to other components that contain sport-related words will most increase the fitted values of those sport-related words (to state the obvious, multiplying big numbers by something creates a bigger absolute change than multiplying small numbers by the same thing). These double-duty components are also somewhat harder to interpret (k10 for c=0.001 is particularly challenging); it is possible that increasing the number of factors, and/or introducing methods that encourage sparsity of the word scores could help here.

While the more modular/layered representation of $c=0.001$ may sometimes be harder to interpret, it can also sometimes highlight subtler structure that is missing in the clustered representation of $c=\infty$. For example, the soccer factor (k7) appears only in sports articles for $c=\infty$, but appears across all 5 document classes in the $c = 0.001$ fit. Manually examining the documents with high scores on k7 (with $c=0.001$) reveals that they typically mention words related to sports, and sometimes specifically British soccer, once or twice in the article. For example, the politics article with the highest score on k7 is about a controversial comment made by then London Mayor Ken Livingstone, and mentions soccer in reference to a similarly-controversial comment made by Boris Johnson regarding Liverpool fans \citep{bbcNEWSPolitics}.


\setlength{\tabcolsep}{2.5pt}
\begin{figure}[htbp]
    \centering
    \begin{minipage}{\linewidth}
      \centering
      \includegraphics[width=1.0\linewidth]{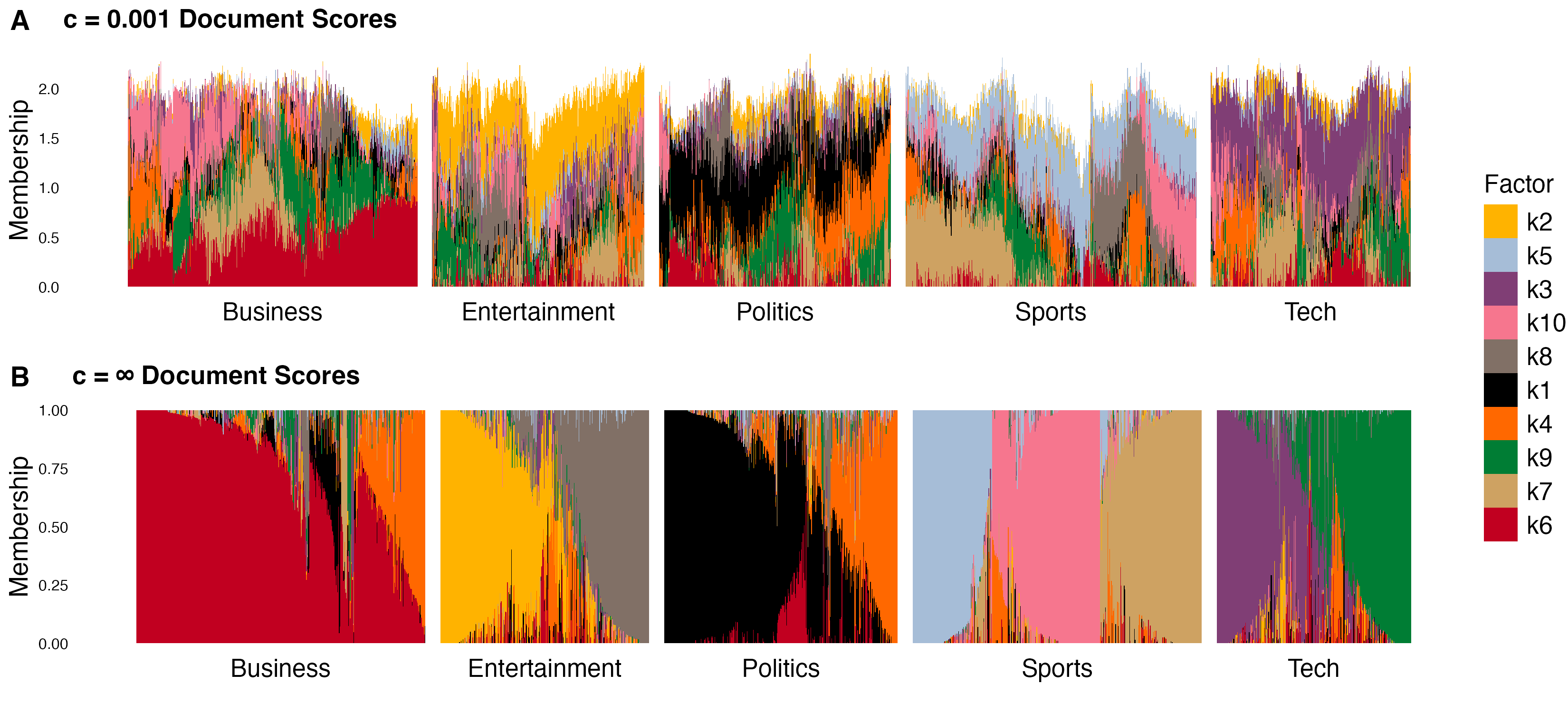}
    \end{minipage}
    
    \begin{minipage}{\linewidth}
\centering
{\scriptsize
\begin{tabularx}{\textwidth}{l Y Y}
\toprule
Factor & Top Words - $c = 0.001$ & Top Words - $c = \infty$\\
\midrule
k1 & tori, labour, elect, parti, blair,
kennedi, lib, conserv, dem, howard,
liber, democrat, voter, prime,
chancellor, tax, brown, vote, toni,
immigr & \cellcolor{gray!10}{labour, elect, parti, govern, peopl,
blair, minist, tori, plan, brown, tax,
howard, public, leader, prime, campaign,
work, polit, vote, year}\\
k2 & \cellcolor{gray!10}{film, actor, oscar, nomin, actress,
award, comedi, star, foxx, nomine,
aviat, drama, sideway, scorses,
staunton, dicaprio, bafta, hollywood,
ceremoni, drake} & film, award, star, year, show, actor,
director, oscar, nomin, includ, won,
movi, actress, role, win, prize, comedi,
festiv, bbc, british\\
k3 & user, technolog, broadband, digit,
mobil, devic, phone, web, content,
network, net, microsoft, comput,
download, program, search, internet,
video, pcs, gadget & \cellcolor{gray!10}{game, mobil, technolog, peopl, phone,
digit, video, year, player, make, time,
play, music, devic, develop, market,
gadget, high, work, comput}\\
k4 & \cellcolor{gray!10}{hunt, sentenc, crimin, guilti, prosecut,
murder, law, suspect, arrest, virus,
detain, lawyer, evid, trial, investig,
fraud, polic, offenc, ban, prosecutor} & law, court, govern, lord, case, rule,
polic, legal, told, claim, right, charg,
yuko, offic, compani, bill, trial, ban,
hunt, year\\
k5 & wale, win, ireland, england, victori,
injuri, match, play, franc, squad,
coach, itali, side, half, william,
minut, cup, player, game, score & \cellcolor{gray!10}{england, wale, ireland, game, nation,
rugbi, franc, play, half, side, win,
player, back, year, coach, scotland,
team, robinson, injuri, time}\\
\addlinespace
k6 & \cellcolor{gray!10}{growth, economi, price, rate, economist,
rise, profit, bank, inflat, quarter,
market, forecast, econom, rose, analyst,
manufactur, consum, export, dollar,
retail} & year, market, compani, bank, sale,
price, firm, share, growth, economi,
month, rate, expect, econom, countri,
rise, busi, report, profit, china\\
k7 & chelsea, arsenal, liverpool, wenger,
gerrard, parri, rover, club, mourinho,
ferguson, everton, striker, fiat,
consol, manchest, villa, footbal,
anfield, aston, morient & \cellcolor{gray!10}{club, game, play, player, unit, time,
chelsea, manag, footbal, leagu, goal,
team, liverpool, win, arsenal, back,
year, manchest, side, cup}\\
k8 & \cellcolor{gray!10}{athlet, olymp, women, urban, marathon,
album, radcliff, artist, indoor, athen,
hip, medal, hop, gadget, song, band,
music, drug, holm, soul} & music, year, song, band, record, album,
show, number, includ, top, singl, chart,
award, artist, perform, rock, singer,
releas, peopl, radio\\
k9 & iran, straw, palestinian, blog, isra,
aid, embargo, tsunami, turkey, peac,
egypt, china, foreign, iraqi, israel,
wto, cyprus, india, blogger, rugbi & \cellcolor{gray!10}{peopl, user, net, servic, network, site,
softwar, firm, internet, system, mail,
comput, search, secur, call, inform,
onlin, websit, broadband, virus}\\
k10 & \cellcolor{gray!10}{deutsch, yuko, boers, lse, sharehold,
takeov, euronext, russian, mci,
file, seed, gazprom, bid, ukip,
auction, houston, kilroy, chart, silk,
khodorkovski} & year, world, win, final, set, open,
olymp, play, time, champion, athlet,
match, race, titl, game, won, drug,
test, roddick, beat\\
\bottomrule
\end{tabularx}
}
    \end{minipage}
    
    \caption{Combined figure and table for the BBC analysis. (A-B) Visual representation of fitted $\mathbf{L}$ matrices for the topic model and log1p NMF with $c = 1$, grouped by document type. Each column represents a row of $\mathbf{L}$, where each color corresponds to a column of $\mathbf{L}$. (Table) Top words for each factor.}
    \label{fig:bbc_structure}
\end{figure}

\subsection{Systematic assessment of sparsity and correlation}

In discussing the results above, particularly for the last two datasets, we noted that standard Poisson NMF ($c=\infty$) produces more ``clustered'' representations, where most samples (cells or documents) have appreciable scores on only one or two factors, and that the inferred factors tend to be highly correlated; in contrast, smaller values of $c$ produced more modular/layered representations, with each sample being represented as a combination of more factors (although still a modest number) that are less correlated.
In brief:  larger values of $c$ produced sparser loadings and more correlated factors. 
 
 To quantify this trend more systematically, we fit log1p NMF to each dataset for a grid of values between $c = 10^{-3}$ and $c = 10^{3}$ (as well as $c = \infty$). For each fit, we measured the mean (absolute) correlation between columns of $\mathbf{F}$, and the mean column-wise ``sparsity'' of both $\mathbf{L}$ and $\mathbf{F}$. Since the scale of the matrices $\mathbf{L}$ and $\mathbf{F}$ changes across values of $c$, we calculated rank correlation, and to measure sparsity we use Hoyer's metric \citep{hoyer2004non} defined for an $n$-vector $\mathbf{x}$ as
\begin{equation*}
    \textrm{sparsity}(\mathbf{x}) = \frac{\sqrt{n} - \lvert\lvert \mathbf{x} \rvert\rvert_{1} / \lvert\lvert \mathbf{x} \rvert\rvert_{2} }{\sqrt{n} - 1}.
\end{equation*}
(This is arguably measuring skewness, rather than sparsity, but seems sufficient for our purposes; see \cite{hurley2009comparing} for a broad discussion of sparsity metrics.) The results (Figure \ref{fig:sparsity}) confirm that larger $c$ produces sparser loadings (and, in fact, factors) and more correlated factors. Visualizations of the sample scores of these fitted models are provided in Appendix \ref{app:add_figs}.


\begin{figure}
    \centering
    \includegraphics[width=1.0\linewidth]{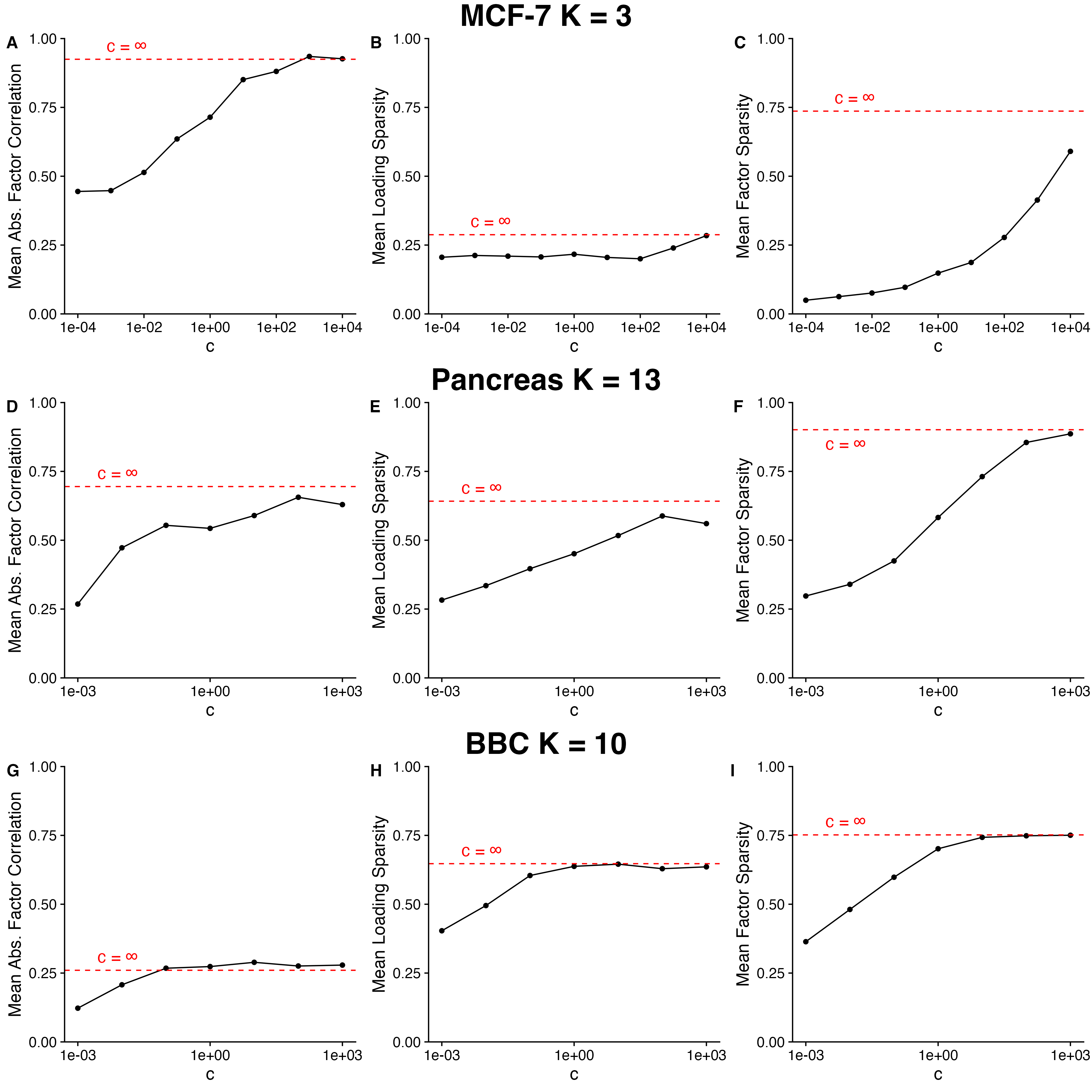}
    \caption{Summary of model fits across values of $c$ for real data applications. (Spearman) correlations are calculated pairwise between all factors (columns of $\mathbf{F}$) and averaged. Sparsity values are calculated for each loading / factor (columns of $\mathbf{L}$ or columns of $\mathbf{F}$) and then averaged. (A-C) Correlation and sparsity values for the MCF-7 dataset with $K = 3$. (D-F) Correlation and sparsity values for the pancreas dataset with $K = 13$. (G-I) Correlation and sparsity values for the BBC dataset with $K = 10$.}
    \label{fig:sparsity}
\end{figure}

\section{Discussion}

We have introduced log1p Poisson NMF, which is, to our knowledge, the first version of Poisson NMF with a non-identity link function. Our new approach has a parameter, $c$, which controls the behavior of the link function: large values of $c$ make it more linear, similar to standard Poisson NMF, and small values of $c$ make it more logarithmic (factors combine more multiplicatively), and different from standard Poisson NMF.  In three examples we showed that different link functions (different values of $c$) can produce quite different results, potentially yielding different insights. Since Poisson NMF is widely used in practice, these new methods have the potential to provide new insights in a wide range of applications. 

Given the new ability to fit Poisson NMF with a range of link functions, it may seem natural to ask what the ``correct'' link function is for any given application. However, Poisson NMF may perhaps be best viewed as an exploratory data analysis tool, and this view encourages a different perspective: different link functions that provide different representations and different insights may compliment one another rather than compete. For example, in our applications we saw that $c=\infty$ may produce a more clustered view of the data, whereas smaller $c$ produced a more modular/layered view. Neither view is ``correct'', but both may be useful in their own way (and so, in this sense, neither is ``incorrect"). In practice, we therefore suggest  beginning NMF analyses of count data by running log1p NMF with both $c=\infty$ and a smaller value of $c$ (e.g., $c=1$) to compare the results. Even in settings where one might argue for an additive link function on scientific grounds, comparing the results with $c=1$ could be an interesting exercise.

One trend we saw in our applications is that log1p NMF with small $c$ tends to produce less sparse solutions. Usually sparsity is considered helpful for interpretability of results, so producing less sparse solutions may seem a step backwards. However, in our applications this does not appear clear-cut: indeed, in these applications small $c$ arguably produces representations that are more ``parts-based'', in the sense of \cite{lee1999learning}, who highlighted this as a major benefit of NMF. Nonetheless, all things being equal, sparsity may indeed aid interpretation, and investigating sparse versions of log1p NMF could be an interesting area for future work.

Finally, we also found empirically that small $c$ produced results with less-correlated factors. It is possible that this result is particular to the data sets we examined here, but it is also possible that this is a more general phenomenon.  We have performed some very preliminary investigations in this direction (see Appendix \ref{sec:app_geometry}), and can show that, for small $c$ ($c \rightarrow 0^+$)
the expressivity of a regression model with two factors depends on the upper right boundary of the convex hull when these two factors are plotted against one another. If the two factors are highly correlated then there will be few points on this hull, resulting in low expressivity. (Indeed, if the factors have the same maximal element then the upper right convex hull is a single point, and the 2-factor regression model collapses to a single factor).
Extending this intuition to $K$ factors suggests that small $c$ models may prefer less correlated factors simply because they are more expressive, resulting in a better fit to data. A rigorous mathematical study of this, and more generally of identifiability issues with log1p NMF, could also be an interesting area for further work.

\newpage
\acks{This research was supported by NIH grant HG002585 to MS and NIH grants R35GM131802 and R01HG005220 to RAI. We thank members of the Stephens and Irizarry labs for helpful comments and feedback.}


\newpage

\appendix

\section{Derivations and proofs}

\subsection{Proof of Theorem \ref{theorem:log1p-c-inf}}
\label{app:thm1_pf}
\begin{proof}
        First, with $\alpha_c := \max(1, c)$, define the functions
        \begin{align*}
            \lambda_{\textrm{log1p}} (\mathbf{l}, \mathbf{f}, c) &= c \left( \exp\left\{\frac{1}{\alpha}\sum_{k = 1}^{K} l_{k}f_{k}\right\} - 1\right) \\
            \lambda_{\textrm{id}} (\mathbf{l}, \mathbf{f}) &= \sum_{k = 1}^{K} l_{k}f_{k},
        \end{align*}
where $\mathbf{l}, \mathbf{f}  \in \mathbb{R}^{K}_{\geq 0}$, which are the Poisson rate parameters of a single arbitrary element of $\mathbf{Y}$ for the log1p model and Poisson NMF model with identity link, respectively. Since the log-likelihood of data generated from a Poisson distribution depends on its parameters only through its rate, it suffices to show that for any $\mathbf{l}, \mathbf{f} \in \mathbb{R}^{K}_{\geq 0}$,
\begin{equation*}
    \lim_{c \rightarrow \infty} \lambda_{\textrm{log1p}} \left(\mathbf{l}, \mathbf{f}, c\right) = \lambda_{\textrm{id}} (\mathbf{l}, \mathbf{f}).
\end{equation*}
Now, let $b = \sum_{k = 1}^{K} l_{k} f_{k}$ and let $x = \frac{b}{c}$. If $b = 0$, then $\lambda_{\textrm{log1p}} \left(\mathbf{l}, \mathbf{f}, c\right) = \lambda_{\textrm{id}} (\mathbf{l}, \mathbf{f}) = 0$, in which case the statement holds for any $c$. If $b > 0$, we have

\begin{align*}
    \lim_{c \rightarrow \infty} \lambda_{\textrm{log1p}} \left(\mathbf{l}, \mathbf{f}, c\right) &= \lim_{c \rightarrow \infty} c \left( \exp\left\{\frac{1}{\alpha_c}\sum_{k = 1}^{K} l_{k}f_{k}\right\} - 1\right) \\
    &= \lim_{x \rightarrow 0^{+}} \frac{b}{x} (e^{x} - 1) \quad \textrm{(since $\alpha_c = c$ for large $c \geq 1$)} \\
    &= b \lim_{x \rightarrow 0^{+}} \frac{e^{x} - 1}{x} \\
    &= b \lim_{x \rightarrow 0^{+}} e^{x} \quad \textrm{(L'Hospitals Rule)} \\
    &= b
\end{align*}

Substituting this back into the definition of
$\lambda_{\textrm{log1p}}(\cdot)$, we have

\begin{equation*}
    \lim_{c \rightarrow \infty} \lambda_{\textrm{log1p}} \left(\mathbf{l}, \mathbf{f}, c\right) = \sum_{k = 1}^{K} l_{k}f_{k} = \lambda_{\textrm{id}} (\mathbf{l}, \mathbf{f}).
\end{equation*}

\end{proof}

\subsection{Proof of Theorem \ref{theorem:log1p-c-zero}}
\label{app:thm2_pf}
\begin{proof}
Without loss of generality, let $k = K$. Then, by definition we have
\begin{align*}
    \lambda'_{ij} &= g^{-1} \left(\sum_{k=1}^{K} l_{ik} f_{jk}; c\right) \\
    \lambda_{ij} &= g^{-1} \left(\sum_{k=1}^{K - 1} l_{ik} f_{jk}; c \right).
\end{align*}
Then, we can write 
\begin{align*}
     \alpha_{c} \log\frac{\lambda'_{ij} + c}{\lambda_{ij} + c} &= \alpha_{c} \log\frac{g^{-1} \left(\sum_{k=1}^{K} l_{ik} f_{jk}; c\right) + c}{g^{-1} \left(\sum_{k=1}^{K - 1} l_{ik} f_{jk}; c \right) + c}\\
     &= \alpha_{c} \log\frac{c \cdot \exp \left( \frac{1}{\alpha_{c}}\sum_{k=1}^{K} l_{ik} f_{jk} \right) - c + c}{c \cdot \exp \left( \frac{1}{\alpha_{c}} \sum_{k=1}^{K - 1} l_{ik} f_{jk} \right) - c + c} \\
     &= \alpha_{c} \log\frac{\exp \left( \frac{1}{\alpha_{c}}\sum_{k=1}^{K} l_{ik} f_{jk} \right)}{\exp \left( \frac{1}{\alpha_{c}} \sum_{k=1}^{K - 1} l_{ik} f_{jk} \right)} \\
     &= \alpha_{c} \left( \frac{1}{\alpha_{c}} \sum_{k=1}^{K} l_{ik} f_{jk} - \frac{1}{\alpha_{c}} \sum_{k=1}^{K - 1} l_{ik} f_{jk} \right) \\
     &= \sum_{k=1}^{K} l_{ik} f_{jk} - \sum_{k=1}^{K - 1} l_{ik} f_{jk}\\
     &= l_{iK}f_{jK},
\end{align*}
as stated in equation \eqref{eqn:lf_interpretation}. Finally, since we defined $\alpha_{c} = 1$ for all $ 0 < c \leq 1$, we have that $\alpha_{c} \rightarrow 1$ as $c \rightarrow 0^{+}$. Thus, we have that 
\begin{equation*}
    \lim_{c \rightarrow 0^{+}} \alpha_{c} \log\frac{\lambda'_{ij} + c}{\lambda_{ij} + c} = \log\frac{\lambda'_{ij}}{\lambda_{ij}},
\end{equation*}
which proves equation \eqref{eq:thm2_stmt2}.
\end{proof}

\subsection{Proof of bi-concavity of log1p NMF log-likelihood}
\label{app:thm3}
\begin{theorem} \label{theorem:biconcave}
With $c$ fixed, $\ell_{\textrm{log1p}}\left(\mathbf{Y};
\mathbf{L}, \mathbf{F}, c\right)$ is a bi-concave function of $\mathbf{L}$ and $\mathbf{F}$ on the domain $\Omega = \left\{\mathbf{L} \in \mathbb{R}^{n \times K}_{\geq 0}, \mathbf{F} \in \mathbb{R}^{p \times K}_{\geq 0} \mid \sum_{k = 1}^{K} l_{ik}f_{jk} > 0 \: \forall \: (i,j) \in \{1, \dots, n\} \times \{1, \dots, p\}\right\}$. That is, with $c$ and $\mathbf{L}$ fixed, $\ell_{\textrm{log1p}}\left(\mathbf{Y};
\mathbf{L}, \mathbf{F}, c\right)$ is a concave function of $\mathbf{F}$, and with $c$ and $\mathbf{F}$ fixed, $\ell_{\textrm{log1p}}\left(\mathbf{Y};
\mathbf{L}, \mathbf{F}, c\right)$ is a concave function of $\mathbf{L}$.

\begin{proof}
    First, for $b > 0$ and $y \geq 0$, define $$h_y(b) = y \log(e^{b/\alpha_c} - 1) - ce^{b/\alpha_c},$$ where $\alpha_c := \max(1, c)$ for $c > 0$. Then, we have that $$\frac{\partial^{2} h_y}{\partial b^{2}} = -\frac{e^{b/\alpha_c}}{\alpha_c^2}\,\left(\frac{y}{(e^{b/\alpha_c}-1)^2}+c\right).$$ Since $y, c, \alpha_c \geq 0$, for all $b > 0$ we have that $\frac{\partial^{2} h_y}{\partial b^{2}}(b) \leq 0$. Thus, $h_y(b)$ is a concave function of $b$ on the domain $(0, \infty)$ for any $y \geq 0$. Now, note that (up to a constant with respect to $\mathbf{L}$ and $\mathbf{F}$), we can write 
    \begin{align}
        \ell_{\textrm{log1p}}\left(
\mathbf{L}, \mathbf{F}, c; \mathbf{Y}\right) &= \sum_{i = 1}^{n} \sum_{j = 1}^{p} h_{y_{ij}}(b_{ij}) \\
&= \sum_{i = 1}^{n} \sum_{j = 1}^{p} h_{y_{ij}}\left(\sum_{k = 1}^{K} l_{ik} f_{jk}\right). \label{eq:ll_concave}
    \end{align}
    Now, with $\mathbf{F}$ fixed and $\mathbf{L}, \mathbf{F} \in \Omega$, $h_{y_{ij}}\left(\sum_{k = 1}^{K} l_{ik} f_{jk}\right)$ is the composition of an affine function (in $\mathbf{L}$) and a concave function, and is thus itself concave in $\mathbf{L}$ \cite[Section~3.2.2]{boyd2004convex}. Then, since equation \eqref{eq:ll_concave} is a sum of concave functions of $\mathbf{L}$, the sum itself must be concave in $\mathbf{L}$ \cite[Section~3.2.1]{boyd2004convex}. This shows that with $\mathbf{F}$ fixed and $\mathbf{L}, \mathbf{F} \in \Omega$, $\ell_{\textrm{log1p}}\left(
\mathbf{L}, \mathbf{F}, c; \mathbf{Y}\right)$ is a concave function of $\mathbf{L}$.  An analogous argument shows that $\ell_{\textrm{log1p}}\left(
\mathbf{L}, \mathbf{F}, c; \mathbf{Y}\right)$ is a concave function of $\mathbf{F}$ with $\mathbf{L}$ fixed.
\end{proof}
\end{theorem}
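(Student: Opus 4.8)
The plan is to reduce the bi-concavity claim to a one-dimensional concavity fact and then lift it using standard composition rules for concave functions. First I would isolate the scalar building block: for $y \ge 0$ and $b > 0$, set $h_y(b) = y\log(e^{b/\alpha_c}-1) - c\,e^{b/\alpha_c}$, and observe that, up to an additive constant not involving $\mathbf{L}$ or $\mathbf{F}$, the log-likelihood \eqref{eq:log1p_ll} can be regrouped as $\sum_{i=1}^n\sum_{j=1}^p h_{y_{ij}}(b_{ij})$ with $b_{ij}=\sum_{k=1}^K l_{ik}f_{jk}$. This is purely algebraic rearrangement.

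Second, I would show $h_y$ is concave on $(0,\infty)$. Writing $u = b/\alpha_c$ and differentiating twice in $b$ gives
\[
h_y''(b) = -\frac{e^{b/\alpha_c}}{\alpha_c^2}\left(\frac{y}{(e^{b/\alpha_c}-1)^2}+c\right),
\]
which is $\le 0$ for every $b>0$ since $y\ge 0$, $c>0$ and $\alpha_c>0$; hence $h_y$ is concave on the half-line. This step also explains why the domain $\Omega$ must carry the constraint $\sum_k l_{ik}f_{jk}>0$: the term $\log(e^{b/\alpha_c}-1)$ is not even defined for $b\le 0$, so concavity can only be asserted on $(0,\infty)$.

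Third, I would fix $\mathbf{F}$ with $(\mathbf{L},\mathbf{F})\in\Omega$ and note that for each $(i,j)$ the map $\mathbf{L}\mapsto b_{ij}=\sum_k l_{ik}f_{jk}$ is linear, hence affine in $\mathbf{L}$; composing the concave $h_{y_{ij}}$ with this affine map yields a concave function of $\mathbf{L}$ \cite[Section~3.2.2]{boyd2004convex}, and a finite sum of concave functions is concave \cite[Section~3.2.1]{boyd2004convex}, giving concavity of $\ell_{\textrm{log1p}}$ in $\mathbf{L}$. For this to be well-posed I would first check that the $\mathbf{L}$-slice of $\Omega$ is convex: with $\mathbf{F}\ge 0$ fixed, $\{\mathbf{L}\in\mathbb{R}^{n\times K}_{\ge 0}: \sum_k l_{ik}f_{jk}>0\ \forall\, i,j\}$ is the intersection of the nonnegative orthant with finitely many open half-spaces, hence convex. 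The symmetric argument (fixing $\mathbf{L}$, varying $\mathbf{F}$) gives concavity in $\mathbf{F}$, establishing bi-concavity.

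I do not expect a genuine obstacle here: once the sign of $h_y''$ is settled, everything follows from the composition/summation rules. The only point requiring care is the bookkeeping around the domain $\Omega$ — confirming that $b_{ij}>0$ holds throughout, that the relevant slices of $\Omega$ are convex, and that concavity of $h_y$ is never needed outside $(0,\infty)$. The identical chain of reasoning, with $b_i=\mathbf{x}_i^\top\boldsymbol{\beta}$ affine in $\boldsymbol{\beta}$, also yields concavity of the regression log-likelihood \eqref{eq:log1p_nn_reg_ll} in $\boldsymbol{\beta}$, which is the claim alluded to just after \eqref{eq:log1p_nn_reg_ll}.
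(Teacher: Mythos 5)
Your proposal is correct and follows essentially the same route as the paper's proof: the same scalar function $h_y$, the same second-derivative computation showing $h_y'' \le 0$ on $(0,\infty)$, and the same lift via composition with affine maps and summation of concave functions. The small additions you make (checking convexity of the $\mathbf{L}$-slice of $\Omega$ and noting the identical argument for the regression log-likelihood) are sensible but do not change the argument.
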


\section{Computational complexity of approximate log-likelihood}
\label{app:comp_complexity}
In the main text, we suggest an approximation to the log-likelihood of the log1p model:
\begin{align*}
    \ell_{\textrm{log1p}}(\mathbf{L}, \mathbf{F}, c; \mathbf{Y}) \approx & \sum_{(i,j) \notin \mathcal{I}_{0}} y_{ij} \log\left(\exp\left\{ \frac{1}{\alpha_c} \sum_{k = 1}^{K} l_{ik} f_{jk} \right\} - 1 \right) - c\sum_{(i,j) \notin \mathcal{I}_{0}} \exp\left(\frac{1}{\alpha_c}\sum_{k = 1}^{K} l_{ik} f_{jk}\right) \notag \\
& - \frac{\eta_{1}c}{\alpha_c} \sum_{(i,j) \in \mathcal{I}_{0}} \sum_{k = 1}^{K} l_{ik} f_{jk} - \frac{\eta_{2}c}{\alpha_c^{2}}\sum_{(i,j) \in \mathcal{I}_{0}} \left(\sum_{k = 1}^{K} l_{ik} f_{jk}\right)^{2}.
\end{align*}
The first two terms in the above equation clearly require $\mathcal{O}(\omega K)$ operations, where $\omega$ is the number of non-zero entries of $\mathbf{Y}$. Naively, it appears that the second two terms would require $\mathcal{O}(\lvert \mathcal{I}_0 \rvert K)$ operations, as they require computing $\lvert \mathcal{I}_0 \rvert$ terms, each of which requires $K$ operations. However, observe the identities
\begin{align}
        \sum_{i=1}^{n}\sum_{j=1}^{m} \sum_{k = 1}^{K} l_{ik}f_{kj} &= \sum_{k = 1}^{K} \left( \sum_{i = 1}^{n} l_{ik} \right) \left( \sum_{j = 1}^{m} f_{jk} \right), \label{eq:ll_id_lin} \\
  \sum_{i=1}^{n}\sum_{j=1}^{m} \left(\sum_{k = 1}^{K} l_{ik}f_{kj}\right)^{2} &= \textrm{tr}\left(\mathbf{F} \mathbf{L}^{\top} \mathbf{L} \mathbf{F}^{\top} \right). \label{eq:ll_id_quad}
\end{align}
Thus, equation \eqref{eq:ll_id_lin} can be computed in $\mathcal{O}((n + m)K)$ operations and \eqref{eq:ll_id_quad} can be computed in $\mathcal{O}((n + m)K^{2})$ operations. 

Using the above identities, we can re-write the approximate log-likelihood as 
\begin{align}
    \ell_{\textrm{log1p}}(\mathbf{L}, \mathbf{F}, c) \approx & \sum_{(i,j) \notin \mathcal{I}_{0}} \Bigg[ y_{ij} \log\left(\exp\left\{ \frac{1}{\alpha_c}\sum_{k = 1}^{K} l_{ik} f_{jk} \right\} - 1 \right) \label{eq:approx_log1p_ll_clean1} \\
     & - c \left(\exp\left\{\frac{1}{\alpha_c}\sum_{k = 1}^{K} l_{ik} f_{jk}\right\} - \frac{\eta_{1}}{\alpha_c}\sum_{k = 1}^{K} l_{ik} f_{jk} -  \frac{\eta_{2}}{\alpha_c^{2}}\left\{\sum_{k = 1}^{K} l_{ik} f_{jk}\right\}^{2} \right) \Bigg] \label{eq:approx_log1p_ll_clean2}  \\
 & - c\left[\frac{\eta_{1}}{\alpha_c} \sum_{k = 1}^{K} \left( \sum_{i = 1}^{n} l_{ik} \right) \left( \sum_{j = 1}^{m} f_{jk} \right) + \frac{\eta_{2}}{\alpha_c^{2}}\cdot \textrm{tr}\left(\mathbf{F} \mathbf{L}^{\top} \mathbf{L} \mathbf{F}^{\top} \right)\right].  \label{eq:approx_log1p_ll_clean3}
 \end{align}
 The first two lines \eqref{eq:approx_log1p_ll_clean1}-\eqref{eq:approx_log1p_ll_clean2} above can still be computed in $\mathcal{O}(\omega K)$ operations because each term involves the sum of the same $K$ values $(\sum_{K}l_{ik}f_{jk})$, and the final line has complexity as described above. This brings the total computational complexity of the approximate log-likelihood to 
 \begin{equation*}
\mathcal{O}\left((\omega + n + m)K + (n + m)K^{2}\right),
\end{equation*}
as described in the main text.

\section{Fitting non-negative Poisson GLMs with a log1p link using cyclic coordinate Ascent}
\label{app:ccd}
Algorithm \ref{alg:cap} involves repeatedly fitting the model
\begin{align}
y_{i} &\overset{\textrm{indep.}}{\sim} \textrm{Poisson}(\lambda_{i})  \\
\alpha_c \log\left(1 + \frac{\lambda_{i}}c\right) &= \mathbf{x}_{i}^{\top} \boldsymbol{\beta}, \label{eq:log1p_nn_reg_supp} 
\end{align}
where $\mathbf{y} \in \mathbb{N}^{N}_{0}$ is a vector of counts, $\mathbf{X} \in \mathbb{R}^{N \times q}_{\geq 0}$ is a fixed matrix of non-negative ``covariates'', $\boldsymbol{\beta} \in  \mathbb{R}^{q}_{\geq 0}$ is an unknown vector of non-negative regression coefficients, $c \in \mathbb{R}_{> 0}$ is a fixed constant, and $\alpha_c := \max(1, c)$. This model's log-likelihood, $\ell_{\mathrm{log1pReg}}(\boldsymbol{\beta}, c; \boldsymbol{y}, \mathbf{X})$, is written in equation \eqref{eq:log1p_nn_reg_ll}.
Fitting the Non-negative Poisson GLM via maximum likelihood thus reduces to solving the problem
\begin{align}
\hat{\boldsymbol{\beta}}
&=
\mathrm{argmax}_{\boldsymbol{\beta}}
\;\;
\ell_{\mathrm{log1pReg}}(\boldsymbol{\beta}, c; \boldsymbol{y}, \mathbf{X}) \label{eq:ccd1}
\\
\text{subject to} \quad & \beta_j \ge 0,\;\; j = 1,\dots,q . \label{eq:ccd2}
\end{align}
We solve \eqref{eq:ccd1}-\eqref{eq:ccd2} using cyclic co-ordinate ascent (CCA) \citep{bertsekas-1999, wright-2015} due to its simplicity and good performance in similar Poisson matrix factorization problems \citep{carbonetto2021non,weine2024fast}. The CCA algorithm performs the following 1-d optimization for each $j
= 1, \ldots, q$,
\begin{equation}
\beta_j^{\mathrm{new}} \leftarrow 
\mathrm{argmax}_{\beta_j \geq 0} \;
\ell_{\mathrm{log1pReg}}(\boldsymbol{\beta}, c; \boldsymbol{y}, \mathbf{X}),
\label{eq:ccd3}
\end{equation}
and repeats these 1-d optimizations for some fixed number of cycles or until the iterates reach a stationary point (by default our implementation uses a maximum of $3$ cycles). 

To solve each each 1-d optimization problem of the form \eqref{eq:ccd3}, we use a simple projected Newton's method with a line search \citep{bertsekas1982projected}. This algorithm is very efficient, and the projection step can be performed in constant time because we are solving a 1-d problem.

We note that when maximizing our approximate log-likelihood of equation \eqref{eq:approx_log1p_ll}, we also solve an approximate version of problem \eqref{eq:ccd1}-\eqref{eq:ccd2}. Specifically, using the approximate log-likelihood of equation \eqref{eq:approx_log1p_ll} and performing a decomposition analogous to equation \eqref{eq:fit_L} leads to a GLM with approximate log-likelihood
\begin{align*}
    \ell_{\textrm{log1pReg}}(\boldsymbol{\beta}, c; \boldsymbol{y}, \mathbf{X}) &\approx \sum_{i \notin \mathcal{I}_{0}} \Bigg[ y_{i} \log\left\{\exp\left\{\mathbf{x}_{i}^{\top} \boldsymbol{\beta} / \alpha_c \right\} - 1 \right\} \\
    & - c \left(\exp\left\{ \mathbf{x}_{i}^{\top} \boldsymbol{\beta} /\alpha_c\right\}  - \frac{\eta_1}{\alpha_c} \mathbf{x}_{i}^{\top} \boldsymbol{\beta} - \frac{\eta_2}{\alpha_{c}^{2}} \left\{ \mathbf{x}_{i}^{\top} \boldsymbol{\beta}\right\}^{2} \right)\Bigg] \\
    & - c \left(\frac{1}{\alpha_c}\boldsymbol{\beta}^{\top} \mathbf{X}^{\top}\boldsymbol{\eta_1} + \frac{\eta_2}{\alpha_{c}^{2}} \boldsymbol{\beta}^{\top} \mathbf{X}^{\top}\mathbf{X}\boldsymbol{\beta}\right),
\end{align*}
where $\mathcal{I}_0$ is the index set corresponding to the $0$ counts of $\mathbf{y}$, $\eta_1$ and $\eta_2$ are the coefficients used in the approximation $\exp(z)$, and $\boldsymbol{\eta_1}$ is an $N$-vector with $\eta_1$ in each entry. We note that over multiple iterations of CCA (and indeed, over the updates over the rows of $\mathbf{L}$ and $\mathbf{F}$), $\mathbf{X}^{\top}\boldsymbol{\eta_1}$ and $\mathbf{X}^{\top}\mathbf{X}$ only need to be computed \textit{once}. That is, for each outer loop of Algorithm \eqref{alg:cap}, before updating $\mathbf{L}$ we can pre-compute $\mathbf{F}^{\top}\boldsymbol{\eta_1}$ and $\mathbf{F}^{\top}\mathbf{F}$, and before updating $\mathbf{F}$ we can pre-compute $\mathbf{L}^{\top}\boldsymbol{\eta_1}$ and $\mathbf{L}^{\top}\mathbf{L}$. 
\section{Reproducibility of results}
An \texttt{R} package with associated code to fit log1p NMF on our real data examples and reproduce our figures can be found at \url{https://github.com/eweine/log1pNMF}.

\section{Comparing approximation approaches for fitting log1p NMF}
\label{app:approx_comp}
In the main text, we introduced an approximate log-likelihood that is much faster to compute than the exact likelihood of the log1p NMF model and was shown to be reasonably accurate in simulations. Here, we compare the results of using this approximation as opposed to the exact log-likelihood for fitting the log1p model to the pancreas dataset with $c = 1$. In addition, we compare these approaches to fitting Frobenius NMF to the log1p transformed count data (i.e., equations \eqref{eq:frob_nmf}-\eqref{eq:log1p_trans} with $c = 1$). 

All models were fit by first fitting the log1p model with exact log-likelihood and $K = 1$, and then initializing the full model with the result of fitting the $K = 13$ model for one iteration. This was done in order to encourage the factors of the different models to be comparable.

The cell scores for each of these three approaches are shown in Figure \ref{fig:supp_lsa_approx1} and Figure \ref{fig:supp_lsa_approx2}. Generally speaking, all three models discover relatively similar ``cell type associated'' structure. However, interestingly, only the two approximate methods appear to fit factors that model a spectrum between endothelial and mesenchymal cells. However, the cost of this factor appears to be a poor representation of the Acinar cells, which do not appear to be represented very cleanly in either approximation approach.

The ``treatment associated'' cell scores are extremely similar between the model fit with the exact log-likelihood (Figure \ref{fig:supp_lsa_approx1}A) and the model fit with the quadratic approximation to the log-likelihood (Figure \ref{fig:supp_lsa_approx1}B). Specifically, both models have 1 factor each that represents the Untreated, IL-1$\beta$, and IFN$\gamma$ treatments, respectively, and uses a combination of these factors to represent the combination of treatments. However, the results of fitting Frobenius NMF to the log1p transformed counts are not as parsimonious. While this model does appear to identify one factor each associated with IL-1$\beta$ and IFN$\gamma$ treatment (k10 and k4, respectively), the combination group is represented as a combination of k4 and an entirely separate factor k13. We additionally fit Frobenius NMF to the log1p transformed counts with $K=12$ and $K=11$ using the same initialization strategy as above (data not shown), and these models similarly identified a unique factor to the combination treatment group. 

\begin{figure}
    \centering
    \includegraphics[width=1.0\linewidth]{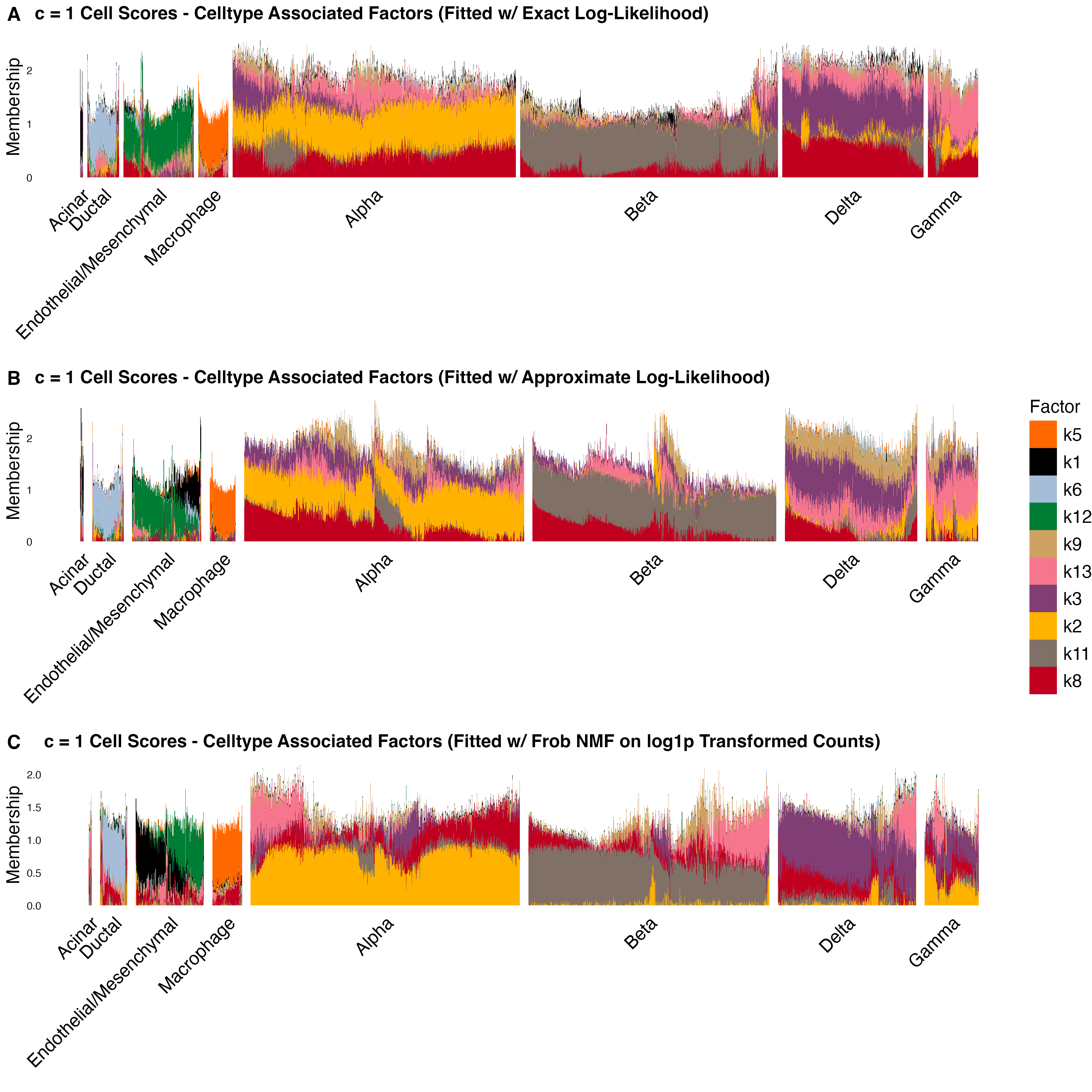}
    \caption{Celltype associated factors for the log1p NMF model fit my maximizing the exact log-likelihood (A), as well as with the Chebyshev approximation approach (B), and by fitting Frobenius NMF to the log1p transformed counts (C).}
    \label{fig:supp_lsa_approx1}
\end{figure}

\begin{figure}
    \centering
    \includegraphics[width=1.0\linewidth]{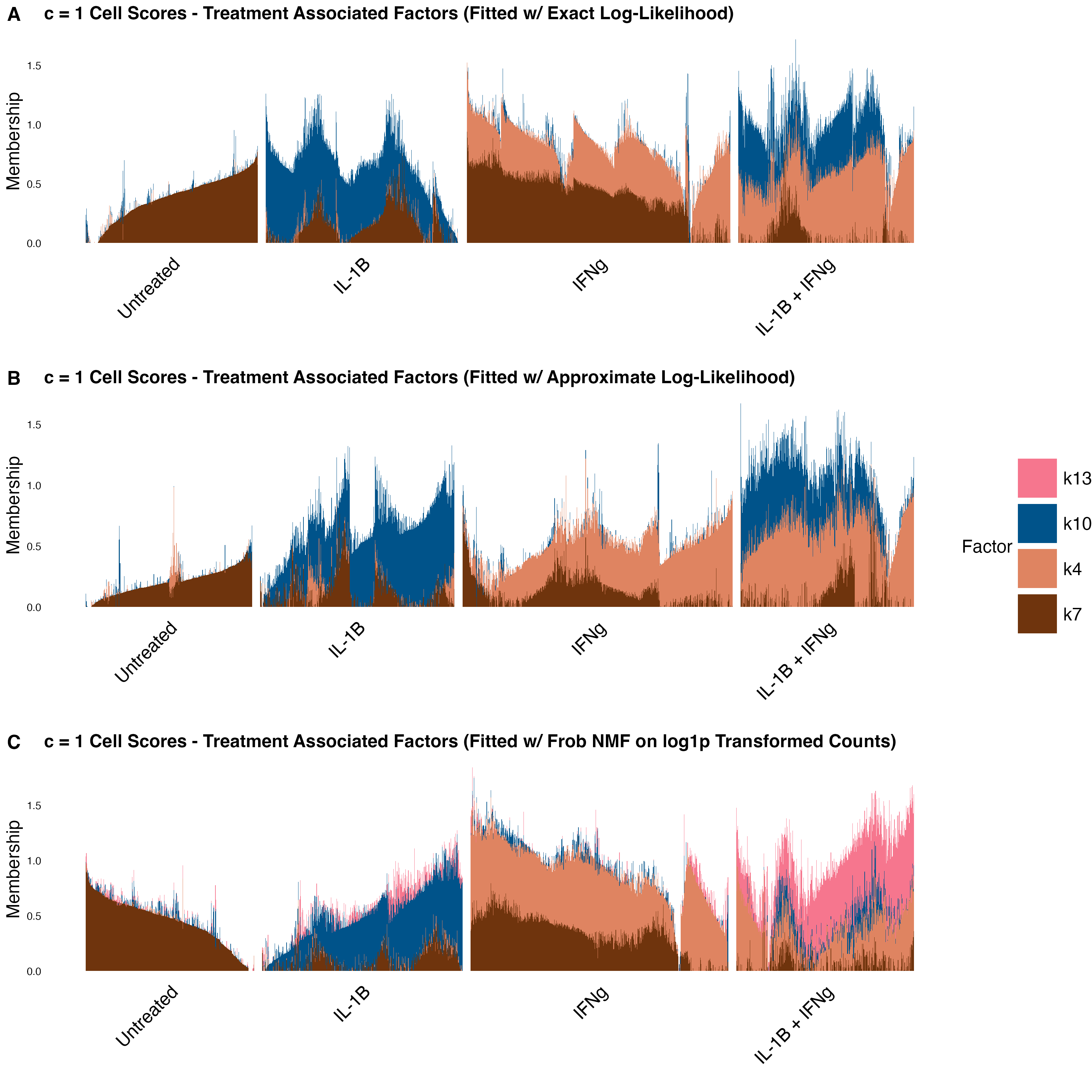}
    \caption{Treatment associated factors for the log1p NMF model fit my maximizing the exact log-likelihood (A), as well as with the Chebyshev approximation approach (B), and by fitting Frobenius NMF to the log1p transformed counts (C).}
    \label{fig:supp_lsa_approx2}
\end{figure}

Overall, both approximation approaches appear qualitatively reasonably similar for this setting of $c$ on this particular dataset. However, the quadratic approximation approach is clearly more similar to the exact log-likelihood approach, and in particular is more parsimonious in its representation of the combined treatment group than fitting Frobenius NMF to the log1p transformed counts. 

\section{The geometry of log1p NMF for \texorpdfstring{$c \rightarrow 0^+$}{c -> 0+}}
\label{sec:app_geometry}

\def\blambda{\boldsymbol{\lambda}}

Let $\boldsymbol{\lambda}_i=(\lambda_{i1},\dots,\lambda_{ip})$ denote the vector of mean values for the $i$th row of $\mathbf{Y}$. In standard Poisson NMF ($c \rightarrow \infty$) the range of possible values of $\boldsymbol{\lambda}_i$ has a simple geometric relationship with the factors (columns of $\mathbf{F}$): it is simply the conical hull of the columns of $\mathbf{F}$ (i.e., the set of non-negative linear combinations of the columns). The implications of this simple geometry for identifiability of NMF have been well studied (e.g., \cite{donoho2003does}). Here we provide an initial result on the corresponding geometry for the log1p NMF model in the case $c \rightarrow 0^+$.

For simplicity we focus on the case $K=2$ factors, although our ideas extend to larger $K$. Since the range of possible values of the vector $\boldsymbol{\lambda}_i$ is the same for each $i$ we drop the subscript $i$, and study the achievable values of $\boldsymbol{\lambda}=(\lambda_1,\dots,\lambda_p)$ where
\begin{equation} \label{eq:log1p_k2}
    \alpha_c \log(1 + \lambda_{j} / c) 
    = f_{j1} l_{1} + f_{j2} l_{2} = \eta_{j} \text{ say}.
\end{equation}
The question we want to answer is: what are the achievable values of $\boldsymbol{\lambda}$ when $l_1,l_2 \geq 0$ as $c \rightarrow 0^+$? 

First we focus on the achievable {\it directions} of the vector $\blambda$ by imposing the constraint $\sum_j \lambda_j = 1$. To formalize the behavior of $\boldsymbol{\lambda}$ as $c \rightarrow 0^{+}$, we first introduce some geometric definitions regarding the rows of $\mathbf{F}$. For simplicity we assume that the rows of $\mathbf{F}$ are in \emph{general position}, which means that no two rows of $\mathbf{F}$ are the same and that no three rows are collinear.  Let $\mathcal{S} = \{(f_{j1}, f_{j2}): j = 1, \dots, p\}$ be the set consisting of the rows of $\mathbf{F}$. Let $\mathcal{H} = \textrm{conv}(\mathcal{S})$ be the convex hull of the set $\mathcal{S}$. We define the upper right boundary of the convex hull, $\partial^+ \mathcal{H}$, as:
\begin{equation*}
    \partial^+ \mathcal{H} := \{\mathbf{x} \in \mathcal{H}: \nexists \;\mathbf{y} \in \mathcal{H} \textrm{ with } \mathbf{y} \succ \mathbf{x}\}.
\end{equation*}
Let the set $\mathcal{P} = \{\mathbf{P}_{(1)}, \dots, \mathbf{P}_{(M)}\} = \mathcal{S} \cap \partial^+ \mathcal{H}$ denote the vertices of the upper right boundary, ordered by their first coordinate. Finally, for each $m \in \{1, \dots, M\}$, let $idx(m)$ denote the index of the row in $\mathbf{F}$ corresponding to the vertex $\mathbf{P}_{(m)}$, and let $\mathcal{I}_{\mathcal{P}} = \{idx(1), \dots, idx(M)\}$ be the set of these indices. Figure \ref{fig:ur_cov_hull} depicts $\mathcal{P}$ and $\partial^+ \mathcal{H}$ for a simple example of $\mathbf{F}$.

We now state the main result, which characterizes (as a function of $\mathbf{F}$) the set of achievable values of $\boldsymbol{\lambda}$ as the union of line segments connecting the standard basis vectors corresponding to adjacent vertices on this boundary.

\begin{theorem} \label{thm_c0_limiting}
Consider the vector $\blambda$ defined in \eqref{eq:log1p_k2} with assumptions as described. Let $\mathcal{L}$ be the set of achievable values of $\boldsymbol{\lambda}$ as $c \rightarrow 0^+$ (subject to $\sum \lambda_j=1$). Then $\mathcal{L}$ is the union of the line segments connecting the standard basis vectors associated with the adjacent vertices of $\partial^+ \mathcal{H}$. Specifically:
\begin{equation} \label{eq:pf_lambda_set}
    \mathcal{L} = \bigcup_{q=1}^{M-1} \left\{ \omega \mathbf{e}_{idx(q)} + (1 - \omega)\mathbf{e}_{idx(q+1)} \mid \omega \in [0, 1] \right\},
\end{equation}
where $\mathbf{e}_j$ is the standard $p$-dimensional basis vector with $1$ in component $j$ and $0$ otherwise.
\end{theorem}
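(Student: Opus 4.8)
The plan is to invert the link, use the normalization $\sum_j\lambda_j=1$ to eliminate $c$, and then read off the limit from a soft-max as $(l_1,l_2)$ is driven to infinity. For all small $c$ we have $\alpha_c=1$, so \eqref{eq:log1p_k2} inverts to $\lambda_j=c(e^{\eta_j}-1)$ with $\eta_j:=f_{j1}l_1+f_{j2}l_2\ge 0$, and imposing $\sum_j\lambda_j=1$ forces the $c$-free identity $\lambda_j=(e^{\eta_j}-1)\big/\sum_{j'}(e^{\eta_{j'}}-1)$ together with $c=1\big/\sum_{j'}(e^{\eta_{j'}}-1)$. Since a vanishing column of $\mathbf{F}$ would place $p\ge 3$ rows on a coordinate axis, general position rules it out, and hence $\max_j\eta_j\to\infty$ exactly when $\|(l_1,l_2)\|\to\infty$, which is exactly when $c\to 0^+$. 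So $\boldsymbol{\lambda}^*\in\mathcal{L}$ iff some sequence $(l_1^{(n)},l_2^{(n)})\in\mathbb{R}^2_{\ge 0}$ with $\|(l_1^{(n)},l_2^{(n)})\|\to\infty$ makes $(e^{\eta_j^{(n)}}-1)\big/\sum_{j'}(e^{\eta_{j'}^{(n)}}-1)\to\lambda_j^*$ for every $j$, and the rest of the argument is about identifying these limit points.

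The geometric core is a lemma: writing $(l_1,l_2)=t\mu$ with $\mu\in\mathbb{R}^2_{\ge 0}$, $\|\mu\|=1$, and $g_j(\mu):=f_{j1}\mu_1+f_{j2}\mu_2$, the set $\arg\max_j g_j(\mu)$ is always either a single vertex of $\partial^+\mathcal{H}$ or a pair of adjacent vertices $\mathbf{P}_{(q)},\mathbf{P}_{(q+1)}$, and in all cases $\max_j g_j(\mu)>0$. Its proof uses all three hypotheses: a first-quadrant linear functional attains its maximum over the finite set $\mathcal{S}$ at the same points as over the hull $\mathcal{H}$, and that maximizing face lies on $\partial^+\mathcal{H}$; the upper-right boundary, listed as $\mathbf{P}_{(1)},\dots,\mathbf{P}_{(M)}$ in increasing order of first coordinate, is a concave polyline with strictly decreasing edge slopes, so the increments $\langle\mathbf{u},\mathbf{P}_{(m+1)}-\mathbf{P}_{(m)}\rangle$ change sign at most once and the restriction of $\langle\mathbf{u},\cdot\rangle$ to $\mathbf{P}_{(1)},\dots,\mathbf{P}_{(M)}$ is unimodal, with argmax a single vertex or a single edge; and ``no three collinear rows'' forbids a third simultaneous maximizer and (together with $\mathbf{F}\ne\mathbf{0}$) the degeneracies that would give $\max_j g_j(\mu)=0$. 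I expect this lemma, and the bookkeeping to pin down exactly which vertices of $\partial^+\mathcal{H}$ enter, to be the main obstacle; once it is in hand everything else follows from the elementary estimate $e^\eta-1\sim e^\eta$ as $\eta\to\infty$.

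For the inclusion $\mathcal{L}\subseteq$ (the claimed union): take a realizing sequence, write it $t_n\mu^{(n)}$ with $t_n\to\infty$, and pass to a subsequence with $\mu^{(n)}\to\mu^*$. If $\arg\max_j g_j(\mu^*)$ is the single vertex $\mathbf{P}_{(m)}$, with row index $idx(m)$, continuity gives a uniform positive gap between $g_{idx(m)}$ and the other $g_j$ near $\mu^*$, which $t_n$ amplifies, so $\eta_{idx(m)}^{(n)}\to\infty$ and $\eta_{idx(m)}^{(n)}-\eta_j^{(n)}\to\infty$ for $j\ne idx(m)$, forcing $\boldsymbol{\lambda}^{(n)}\to\mathbf{e}_{idx(m)}$. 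If instead $\arg\max_j g_j(\mu^*)=\{a,b\}$ with $a=idx(q)$ and $b=idx(q+1)$, the same gap argument kills every coordinate except $a$ and $b$, and since $\eta_a^{(n)},\eta_b^{(n)}\to\infty$ we get $\lambda_a^{(n)}/\lambda_b^{(n)}=(e^{\eta_a^{(n)}}-1)/(e^{\eta_b^{(n)}}-1)\sim e^{\eta_a^{(n)}-\eta_b^{(n)}}$; passing to a further subsequence with $\eta_a^{(n)}-\eta_b^{(n)}\to\delta\in[-\infty,+\infty]$ then yields $\boldsymbol{\lambda}^{(n)}\to\omega\mathbf{e}_a+(1-\omega)\mathbf{e}_b$ with $\omega=e^\delta/(1+e^\delta)\in[0,1]$, a point of the claimed union.

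For the reverse inclusion I would construct explicit sequences. Fix an edge of $\partial^+\mathcal{H}$ with vertices $\mathbf{P}_{(q)},\mathbf{P}_{(q+1)}$ (set $a=idx(q)$, $b=idx(q+1)$), a target $\omega\in(0,1)$, and $\delta=\log(\omega/(1-\omega))$; let $\mathbf{u}$ be the outward normal of that edge, which lies in $\mathbb{R}^2_{\ge 0}$ because the edge is on the upper-right boundary. Choose $\mathbf{l}_0\in\mathbb{R}^2_{\ge 0}$ on the line $\langle\mathbf{P}_{(q)}-\mathbf{P}_{(q+1)},\mathbf{l}\rangle=\delta$ (possible since that line has positive slope in the $(l_1,l_2)$-plane), and set $\mathbf{l}^{(n)}=\mathbf{l}_0+n\mathbf{u}$, which lies in $\mathbb{R}^2_{\ge 0}$ for large $n$. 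Then $\eta_a^{(n)}-\eta_b^{(n)}\equiv\delta$, $\eta_a^{(n)}=\langle\mathbf{P}_{(q)},\mathbf{l}_0\rangle+n\langle\mathbf{P}_{(q)},\mathbf{u}\rangle\to\infty$, and for $j\notin\{a,b\}$ one has $\eta_a^{(n)}-\eta_j^{(n)}\to\infty$ because general position makes $\mathbf{P}_{(q)},\mathbf{P}_{(q+1)}$ the unique maximizers of $\langle\mathbf{u},\cdot\rangle$ over $\mathcal{S}$; hence $\boldsymbol{\lambda}^{(n)}\to\omega\mathbf{e}_a+(1-\omega)\mathbf{e}_b$ and $c_n\to 0$. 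Sweeping $\omega$ over $(0,1)$ and over all edges, and obtaining the pure vertices $\mathbf{e}_{idx(m)}$ from the single-maximizer construction with $\mu$ interior to the normal cone of $\mathbf{P}_{(m)}$, produces exactly the right-hand side of \eqref{eq:pf_lambda_set}. Two degenerate cases need a separate word: $M=1$, where the union is empty and $\mathcal{L}=\{\mathbf{e}_{idx(1)}\}$; and configurations where $\partial^+\mathcal{H}$ has a flat extreme edge, which disappear if ``general position'' is also read as forbidding two rows to share a coordinate value (so that every vertex of $\partial^+\mathcal{H}$ has a full-dimensional normal cone meeting the open first quadrant).
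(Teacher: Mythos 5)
Your proposal is correct and follows essentially the same route as the paper's proof: invert the link and use the normalization $\sum_j\lambda_j=1$ to reduce the problem to the limiting behavior of a softmax of $\boldsymbol{\eta}=\mathbf{F}\mathbf{l}$ as $\|\mathbf{l}\|\to\infty$, then characterize the achievable limits via positive normal vectors supporting the edges and vertices of $\partial^+\mathcal{H}$, with the same explicit sequences (edge normal plus an offset along the edge direction to tune $\omega$). If anything, your forward inclusion---passing to a convergent subsequence of unit directions $\mu^{(n)}$ and using the single-vertex/adjacent-pair dichotomy for $\arg\max_j g_j(\mu^*)$---is carried out more carefully than the paper's brief closing appeal to general position, and your remarks on the degenerate cases $M=1$ and flat extreme edges address corner cases the paper leaves implicit.
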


\begin{remark}
    We imposed the constraint $\sum_j \lambda_j = 1$ to study the geometry of the achievable directions of $\boldsymbol{\lambda}$. Note that these directions are invariant to the scaling factor; the set of achievable directions would be identical if we imposed $\sum_j \lambda_j = A$ for any $A > 0$. Consequently, the set of achievable values of $\boldsymbol{\lambda}$ (in the limit $c \rightarrow 0^+$) forms a cone generated by the directions in $\mathcal{L}$.
\end{remark}

Theorem \ref{thm_c0_limiting} provides the following guide to understanding the geometry of the log1p model for small $c$. First, it shows that the range of achievable values (as $c \rightarrow 0^+$) is concentrated on very sparse vectors (here, with $K=2$, only two values can be non-zero). This suggests that very small values of $c$ might result in rather inflexible models, and, perhaps might best be avoided.
Second, the {\it size} of the set $\mathcal{L}$ depends on the number of points $M$ on the upper right boundary; this suggests that small $c$ models might favor values of $\mathbf{F}$ that produce large $M$ because such models are more expressive. 

In practice we would be interested in the geometry of log1p NMF for intermediate values of $c$ (say $c=1$). Intuitively this should lie somewhere between the geometries of $c \rightarrow 0^+$ and $c \rightarrow \infty$, but we leave investigation of this to future work.

\subsection{Proof of Theorem \ref{thm_c0_limiting}}

Towards proving the theorem, we first have the following lemmas:
\begin{lemma} \label{lemma:softmax}
Consider the vector $\boldsymbol{\lambda}$ defined in \eqref{eq:log1p_k2} with assumptions as described. For $c \in (0, 1)$, let $\eta_{\max} = \max_j \eta_j$. As $c \to 0^+$, the following properties hold:
\begin{enumerate}
    \item $\eta_{\max} \to \infty$.
    \item The vector $\boldsymbol{\lambda}$ converges to the Softmax of $\boldsymbol{\eta}$ with the following bound:
    \begin{equation}
        \lambda_j = \frac{\exp(\eta_j)}{\sum_{j'=1}^p \exp(\eta_{j'})} + \mathcal{O}(p e^{-\eta_{\max}}).
    \end{equation}
\end{enumerate}
\end{lemma}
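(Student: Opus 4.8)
The plan is to invert the link explicitly. Since we are in the regime $c \in (0,1)$ we have $\alpha_c = 1$, so \eqref{eq:log1p_k2} reads $\log(1 + \lambda_j/c) = \eta_j$, i.e.\ $\lambda_j = c\,(e^{\eta_j} - 1)$, where each $\eta_j = f_{j1}l_1 + f_{j2}l_2 \ge 0$ because $\mathbf{F}$, $l_1$, $l_2$ are nonnegative. Imposing the normalization $\sum_j \lambda_j = 1$ then determines $c$ in terms of $\boldsymbol{\eta}$: summing gives $c\sum_{j'}(e^{\eta_{j'}} - 1) = 1$, hence $c = \bigl(\sum_{j'}(e^{\eta_{j'}} - 1)\bigr)^{-1}$ and $\lambda_j = \dfrac{e^{\eta_j} - 1}{\sum_{j'}(e^{\eta_{j'}} - 1)}$. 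Both parts of the lemma follow from this closed form.

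For part 1, observe that $\sum_{j'}(e^{\eta_{j'}} - 1)$ is a sum of $p$ nonnegative terms, so it diverges if and only if its largest term $e^{\eta_{\max}} - 1$ does, i.e.\ if and only if $\eta_{\max} \to \infty$. Since $c = \bigl(\sum_{j'}(e^{\eta_{j'}}-1)\bigr)^{-1}$, the condition $c \to 0^+$ is exactly the condition that this sum diverges, so it forces $\eta_{\max} \to \infty$.

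For part 2, write $S := \sum_{j'} e^{\eta_{j'}}$, so that $\lambda_j = (e^{\eta_j} - 1)/(S - p)$ while the softmax coordinate is $\sigma_j := e^{\eta_j}/S$. A one-line computation (common denominator $S(S-p)$) gives $\lambda_j - \sigma_j = \dfrac{p\,e^{\eta_j} - S}{S\,(S - p)}$. I would bound the numerator by $|p\,e^{\eta_j} - S| \le p\,e^{\eta_{\max}}$, since both $p\,e^{\eta_j}$ and $S$ lie in $[0,\,p\,e^{\eta_{\max}}]$, and bound the denominator from below using $S \ge e^{\eta_{\max}}$ together with $S - p = \sum_{j'}(e^{\eta_{j'}} - 1) \ge e^{\eta_{\max}} - 1 \ge \tfrac12 e^{\eta_{\max}}$, where the last inequality holds once $\eta_{\max} \ge \log 2$ (true for all sufficiently small $c$ by part 1). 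Combining these, $|\lambda_j - \sigma_j| \le 2p\,e^{-\eta_{\max}} = \mathcal{O}(p\,e^{-\eta_{\max}})$, as claimed.

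There is no serious obstacle: the argument is elementary once the explicit formula $\lambda_j = c(e^{\eta_j}-1)$ and the $c$--$\boldsymbol{\eta}$ relation are in hand. The only points requiring care are the finiteness of the index set, which is what lets one pass from ``the sum diverges'' to ``a single term diverges'' in part 1, and confining the lower bound $S - p \ge \tfrac12 e^{\eta_{\max}}$ to the asymptotic regime $\eta_{\max} \to \infty$, consistent with reading the $\mathcal{O}(\cdot)$ estimate as a statement about the limit $c \to 0^+$.
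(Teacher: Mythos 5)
Your proposal is correct and follows essentially the same route as the paper: invert the link to get $\lambda_j = c(e^{\eta_j}-1)$, use the normalization to solve for $c$ in terms of $\boldsymbol{\eta}$, deduce $\eta_{\max}\to\infty$ from the divergence of the finite sum, and then compare the resulting closed form to the softmax. The only difference is cosmetic: the paper bounds the error via the ratio $\lambda_j/\sigma_j$ and a Taylor expansion of $(1-x)^{-1}$, whereas you bound the difference $\lambda_j-\sigma_j$ directly over a common denominator, which yields the same $\mathcal{O}(p\,e^{-\eta_{\max}})$ estimate with explicit constants.
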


\begin{proof}
Since $c \to 0^+$, we have $\alpha_c = 1$. Inverting the link function $\log(1 + \lambda_j/c) = \eta_j$ yields $\lambda_j = c(\exp(\eta_j) - 1)$. Summing over $j$ and applying the constraint $\sum \lambda_j = 1$, we have
\begin{equation*}
    1 = \sum_{j=1}^p c(\exp(\eta_j) - 1) = c \left( \sum_{j=1}^p \exp(\eta_j) - p \right).
\end{equation*}
Rearranging for $c$ gives:
\begin{equation}
    c = \frac{1}{\sum_{j=1}^p \exp(\eta_j) - p}. \label{eq:pf_c_identity}
\end{equation}
As $c \to 0^+$, the denominator in \eqref{eq:pf_c_identity} must approach $\infty$. Since $p$ is finite, this requires $\sum \exp(\eta_j) \to \infty$, which implies $\eta_{\max} \to \infty$. 

To establish the Softmax equivalence, we can substitute \eqref{eq:pf_c_identity} back into the expression for $\lambda_j$ and write
\begin{equation}
    \lambda_j = \frac{\exp(\eta_j) - 1}{\sum_{j'=1}^p (\exp(\eta_{j'}) - 1)}.
\end{equation}
Let $\sigma(\boldsymbol{\eta})_j = \frac{\exp(\eta_j)}{\sum_{j' = 1}^{p} \exp(\eta_{j'})}$ denote the standard Softmax function. We analyze the ratio
\begin{align}
    \frac{\lambda_j}{\sigma(\boldsymbol{\eta})_j} &= \frac{\exp(\eta_j) - 1}{\exp(\eta_j)} \cdot \frac{\sum_{j' = 1}^{p} \exp(\eta_{j'})}{\sum_{{j'} = 1}^{p} \exp(\eta_{j'}) - p} \notag \\
    &= (1 - e^{-\eta_j}) \cdot \left( 1 - \frac{p}{\sum_{j' = 1}^{p} \exp(\eta_{j'})} \right)^{-1}.
\end{align}
Using the Taylor expansion $(1-x)^{-1} = 1 + x + \mathcal{O}(x^2)$ for the second term, we have
\begin{align}
    \lambda_j &= \sigma(\boldsymbol{\eta})_j (1 - e^{-\eta_j}) \left( 1 + \frac{p}{\sum_{j' = 1}^{p} \exp(\eta_{j'})} + \mathcal{O}\left(\frac{p^2}{(\sum_{j' = 1}^{p} e^{\eta_{j'}})^2}\right) \right) \notag \\
    &= \sigma(\boldsymbol{\eta})_j + \sigma(\boldsymbol{\eta})_j \left( \frac{p}{\sum_{j' = 1}^{p} \exp(\eta_{j'})} - e^{-\eta_j} \right) + \text{h.o.t.}
\end{align}
The residual term is dominated by $\max( \frac{p \sigma(\boldsymbol{\eta})_j}{\sum e^{\eta_j}}, \sigma(\boldsymbol{\eta})_j e^{-\eta_j} )$, where we note that $\sigma(\boldsymbol{\eta})_j e^{-\eta_j} = (\sum_{j' = 1}^{p} e^{\eta_j})^{-1}$. Since $\sum_{j' = 1}^{p} \exp(\eta_j) \geq \exp(\eta_{\max})$, the error is $\mathcal{O}(p e^{-\eta_{\max}})$. As $\eta_{\max} \to \infty$, $\boldsymbol{\lambda} \to \sigma(\boldsymbol{\eta})$.
\end{proof}

\begin{lemma} \label{lem:exposed_urhull}
Let $\mathcal{S}$, $\mathcal{H}$, and $\partial^+ \mathcal{H}$ be defined as above. The following properties hold:
\begin{enumerate}
    \item For each edge $E_q$ of $\partial^+ \mathcal{H}$ connecting two adjacent vertices $\mathbf{P}_{(q)}$ and $\mathbf{P}_{(q+1)}$ (with $1 \le q \le M-1$), there exists a vector $\mathbf{z}_q \in \mathbb{R}^2_{>0}$ such that
    \[
      \mathbf{z}_q \cdot \mathbf{P}_{(q)} = \mathbf{z}_q \cdot \mathbf{P}_{(q+1)} > \mathbf{z}_q \cdot \mathbf{P}_{(j)} \quad \text{for all } j \notin \{q,q+1\}.
    \]
    \item For each vertex $\mathbf{P}_{(k)}$ of $\partial^+ \mathcal{H}$ $(k \in \{1,\dots,M\})$, there exists a vector $\mathbf{x}_k \in \mathbb{R}^2_{>0}$ such that
    \[
      \mathbf{x}_k \cdot \mathbf{P}_{(k)} > \mathbf{x}_k \cdot \mathbf{P}_{(j)} \quad \text{for all } j \neq k.
    \]
\end{enumerate}
\end{lemma}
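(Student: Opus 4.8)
The plan is to read both parts of the lemma as instances of the elementary fact that every face (edge or vertex) of a convex polygon is exposed by an outward normal, and then to verify that faces lying on the Pareto frontier $\partial^{+}\mathcal{H}$ can be exposed by \emph{strictly positive} normals. I would open with two preliminaries. First, because $\mathcal{H}=\textrm{conv}(\mathcal{S})$ and the rows of $\mathbf{F}$ are in general position, no point of $\mathcal{S}$ lies in the relative interior of an edge of $\mathcal{H}$; hence every element of $\mathcal{P}$ is a vertex of $\mathcal{H}$, the points $\mathbf{P}_{(1)},\dots,\mathbf{P}_{(M)}$ are exactly the vertices of $\mathcal{H}$ on the connected polygonal chain $\partial^{+}\mathcal{H}$, and $E_{1},\dots,E_{M-1}$ are exactly the edges of $\mathcal{H}$ composing that chain. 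Second, if $\mathbf{p}\in\partial^{+}\mathcal{H}$ then by Pareto-maximality the translated open cone $\mathbf{p}+\mathbb{R}^{2}_{>0}$ is disjoint from the convex polygon $\mathcal{H}$; separating these two sets yields a supporting hyperplane of $\mathcal{H}$ at $\mathbf{p}$ whose outward normal $\mathbf{w}$ is nonzero and satisfies $\mathbf{w}\ge 0$ (testing the separating inequality at $\mathbf{p}+t\mathbf{e}_{i}$ and letting $t\downarrow 0$ forces $w_{i}\ge 0$).

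For part 1, I would let $\mathbf{z}_{q}$ be the outward normal of the supporting line of $\mathcal{H}$ that contains the edge $E_{q}$. Applying the second preliminary at a relative-interior point of $E_{q}$ shows $\mathbf{z}_{q}\ge 0$, and $\mathbf{z}_{q}$ cannot have a zero entry, since that would make $E_{q}$ axis-parallel and force the coordinates of $\mathbf{P}_{(q)}$ and $\mathbf{P}_{(q+1)}$ to agree in one component, contradicting general position; hence $\mathbf{z}_{q}\in\mathbb{R}^{2}_{>0}$. The equality $\mathbf{z}_{q}\cdot\mathbf{P}_{(q)}=\mathbf{z}_{q}\cdot\mathbf{P}_{(q+1)}$ holds because both points lie on that line, and the strict inequality against every other point of $\mathcal{S}$ follows from the supporting-hyperplane property together with the no-three-collinear assumption, which guarantees that $\mathbf{P}_{(q)}$ and $\mathbf{P}_{(q+1)}$ are the only points of $\mathcal{S}$ on the line.

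For part 2, I would use that the functionals uniquely maximized over $\mathcal{H}$ at a vertex $\mathbf{P}_{(k)}$ are exactly the relative interior of its normal cone, which is generated by the outward normals $\mathbf{n}^{-},\mathbf{n}^{+}$ of the two edges of $\mathcal{H}$ incident to $\mathbf{P}_{(k)}$. When $2\le k\le M-1$ both incident edges lie on $\partial^{+}\mathcal{H}$, so part 1 gives $\mathbf{n}^{-},\mathbf{n}^{+}\in\mathbb{R}^{2}_{>0}$; then $\mathbf{x}_{k}:=\mathbf{n}^{-}+\mathbf{n}^{+}$ lies both in the interior of the cone and in $\mathbb{R}^{2}_{>0}$, which is exactly what is asserted. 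For the two endpoints of the chain I would argue directly: by general position $\mathbf{P}_{(1)}$ is the unique maximizer of the coordinate functional $x_{2}$ over $\mathcal{S}$, so for all sufficiently small $\varepsilon>0$ the strictly positive functional $\mathbf{x}_{1}=(\varepsilon,1)$ is still uniquely maximized at $\mathbf{P}_{(1)}$; symmetrically $\mathbf{x}_{M}=(1,\varepsilon)$ works at $\mathbf{P}_{(M)}$.

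The separation argument and the supporting-hyperplane bookkeeping are routine. The step most in need of care — and the closest thing to an obstacle — is the interaction with general position: excluding axis-parallel edges in part 1, and handling the two endpoints $\mathbf{P}_{(1)},\mathbf{P}_{(M)}$ in part 2, where the normal-cone argument would otherwise permit a normal with a zero entry. Both are dispatched by the elementary observations above, so I expect the proof to be short once the geometric picture is set up.
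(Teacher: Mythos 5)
Your proposal is correct and follows essentially the same route as the paper: explicit outward normals for the edges of the upper-right chain, sums of adjacent edge normals for interior vertices, and perturbations of the coordinate functionals $\mathbf{e}_1,\mathbf{e}_2$ for the two endpoints. The only substantive difference is cosmetic: you obtain non-negativity of the normals from a separation argument against the cone $\mathbf{p}+\mathbb{R}^2_{>0}$, whereas the paper just rotates the edge direction vector by $90^\circ$.

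One justification needs repair, and it is precisely the step you flag as most delicate. You argue that an edge normal cannot have a zero entry because an axis-parallel edge would force $\mathbf{P}_{(q)}$ and $\mathbf{P}_{(q+1)}$ to agree in one coordinate, ``contradicting general position''; and similarly that ``by general position'' $\mathbf{P}_{(1)}$ uniquely maximizes the second coordinate over $\mathcal{S}$. But general position as defined in the paper only excludes repeated rows and collinear triples -- it does not forbid two points of $\mathcal{S}$ from sharing a coordinate. The correct reason is Pareto-maximality, not general position: if two adjacent vertices of $\partial^+\mathcal{H}$ agreed in one coordinate, the one with the larger remaining coordinate would dominate the other, contradicting membership in $\partial^+\mathcal{H}$. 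This is exactly the observation the paper makes (the edge direction satisfies $\Delta f_1>0$ and $\Delta f_2<0$, so the rotated normal $(-\Delta f_2,\Delta f_1)$ is strictly positive), and it also fixes the endpoint step: the second coordinates strictly decrease along $\mathbf{P}_{(1)},\dots,\mathbf{P}_{(M)}$, so $\mathbf{e}_2$ is uniquely maximized at $\mathbf{P}_{(1)}$ among the vertices of the chain, which is all the lemma's quantifier over $j\in\{1,\dots,M\}$ requires (and your perturbation $(\varepsilon,1)$ then goes through). With that substitution your argument is complete.
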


\begin{proof}
 Recall that the vertices $\mathcal{P} = \{\mathbf{P}_{(1)}, \dots, \mathbf{P}_{(M)}\}$ are ordered by their first coordinate, such that $P_{(1),1} < P_{(2),1} < \dots < P_{(M),1}$.

\paragraph{1. Proof of Property 1 (Edges):}
Fix some $q \in \{1, \dots, M-1\}$. By the definition of the upper right boundary $\partial^+ \mathcal{H}$, it must be that $P_{(q+1), 2} < P_{(q), 2}$. If this were not the case, then it would hold that $P_{(q+1), 1} > P_{(q), 1}$ and $P_{(q+1), 2} \ge P_{(q), 2}$, which would imply that $\mathbf{P}_{(q+1)} \succ \mathbf{P}_{(q)}$, contradicting the assumption that $\mathbf{P}_{(q)}$ lies on the upper right boundary.

Let $\mathbf{d}_q = \mathbf{P}_{(q+1)} - \mathbf{P}_{(q)} = (\Delta f_1, \Delta f_2)$. From our ordering and the argument above, we have $\Delta f_1 > 0$ and $\Delta f_2 < 0$. To satisfy the first property in the Lemma, we seek a normal vector $\mathbf{z}_q = (z_1, z_2)$ with $\mathbf{z}_q \cdot \mathbf{d}_q = 0$, or in our simple 2-dimensional case
\[
z_1 \Delta f_1 + z_2 \Delta f_2 = 0.
\]
Clearly, all vectors of the form $$\mathbf{z}_q = a(-\Delta f_2, \Delta f_1)$$ where $a > 0$ will satisfy this constraint. And, because $\Delta f_1 > 0$ and $\Delta f_2 < 0$, we are ensured $\mathbf{z}_q \in \mathbb{R}^2_{>0}$. Thus, $\mathbf{z}_q \cdot \mathbf{P}_{(q)} = \mathbf{z}_q \cdot \mathbf{P}_{(q+1)}$. 

Now, by the definition of a convex hull, all points in $\mathcal{S}$ lie on one side of the line supporting $E_q$. Because $\partial^+ \mathcal{H}$ is the upper-right boundary, the interior of the hull lies in the direction of $-\mathbf{z}_q$. Thus, by the general position assumption, $\mathbf{z}_q \cdot \mathbf{P}_{(q)} > \mathbf{z}_q \cdot \mathbf{P}_{(j)}$ for all other $j$.

\paragraph{2. Proof of Property 2 (Vertices):}
We construct the vector $\mathbf{x}_k$ by considering the normal cone at each vertex $\mathbf{P}_{(k)}$.

First, consider the case of an internal vertex $\mathbf{P}_{(k)}$ with $1 < k < M$. $\mathbf{P}_{(k)}$ is the unique intersection of the two adjacent edges $E_{k-1}$ (connecting $\mathbf{P}_{(k-1)}$ and $\mathbf{P}_{(k)}$) and $E_k$ (connecting $\mathbf{P}_{(k)}$ and $\mathbf{P}_{(k+1)}$). Let $\mathbf{z}_{k-1}$ and $\mathbf{z}_k$ be the strictly positive normal vectors for these edges derived in Part 1. By construction:
\begin{align*}
    \mathbf{z}_{k-1} \cdot \mathbf{P}_{(k)} &\ge \mathbf{z}_{k-1} \cdot \mathbf{P}_{(j)} \quad \forall j, \quad \text{(equality holds for } j=k-1\text{)} \\
    \mathbf{z}_{k} \cdot \mathbf{P}_{(k)} &\ge \mathbf{z}_{k} \cdot \mathbf{P}_{(j)} \quad \forall j, \quad \text{(equality holds for } j=k+1\text{)}.
\end{align*}
Define $\mathbf{x}_k = \mathbf{z}_{k-1} + \mathbf{z}_k$. Since $\mathbf{z}_{k-1}, \mathbf{z}_k \in \mathbb{R}^2_{>0}$, their sum $\mathbf{x}_k \in \mathbb{R}^2_{>0}$.

To prove the strict inequality in the lemma, consider any $j \neq k$. 
Due to the general position assumption (no three points collinear), $\mathbf{P}_{(j)}$ cannot lie on both lines defined by the edges $E_{k-1}$ and $E_k$. Therefore, strict inequality must hold for at least one of the two terms:
\[
    \mathbf{x}_k \cdot (\mathbf{P}_{(k)} - \mathbf{P}_{(j)}) = \underbrace{\mathbf{z}_{k-1} \cdot (\mathbf{P}_{(k)} - \mathbf{P}_{(j)})}_{\ge 0} + \underbrace{\mathbf{z}_{k} \cdot (\mathbf{P}_{(k)} - \mathbf{P}_{(j)})}_{\ge 0} > 0.
\]
Thus, $\mathbf{x}_k \cdot \mathbf{P}_{(k)} > \mathbf{x}_k \cdot \mathbf{P}_{(j)}$ for all $j \neq k$.

Finally, we consider the endpoints:

First, consider $\mathbf{P}_{(1)}$. Since the vertices are ordered by their first coordinate and lie on the upper right boundary, $\mathbf{P}_{(1)}$ strictly has the largest second coordinate in $\mathcal{P}$. Let $\mathbf{e}_2 = (0,1)$. Then $\mathbf{e}_2 \cdot \mathbf{P}_{(1)} > \mathbf{e}_2 \cdot \mathbf{P}_{(j)}$ for all $j \neq 1$.
Let $\mathbf{z}_1 \in \mathbb{R}^2_{>0}$ be the normal to the first edge $E_1$. We define $\mathbf{x}_1 = \mathbf{z}_1 + \mathbf{e}_2$. Since $\mathbf{z}_1$ has strictly positive components, $\mathbf{x}_1 \in \mathbb{R}^2_{>0}$. Since $\mathbf{z}_1$ (weakly) maximizes the projection at $\mathbf{P}_{(1)}$ by the argument in part 1, and $\mathbf{e}_2$ maximizes it uniquely, the sum $\mathbf{x}_1$ maximizes the projection uniquely at $\mathbf{P}_{(1)}$.

Second, consider $\mathbf{P}_{(M)}$. Similarly, $\mathbf{P}_{(M)}$ strictly has the largest first coordinate in $\mathcal{P}$. Let $\mathbf{e}_1 = (1,0)$. Then $\mathbf{e}_1 \cdot \mathbf{P}_{(M)} > \mathbf{e}_1 \cdot \mathbf{P}_{(j)}$ for all $j \neq M$.
Let $\mathbf{z}_{M-1} \in \mathbb{R}^2_{>0}$ be the normal to the last edge $E_{M-1}$. We define $\mathbf{x}_M = \mathbf{z}_{M-1} + \mathbf{e}_1$. By the same logic, $\mathbf{x}_M \in \mathbb{R}^2_{>0}$ and uniquely maximizes the projection at $\mathbf{P}_{(M)}$.
\end{proof}

\noindent Now, using these Lemmas, we can prove Theorem \ref{thm_c0_limiting}.
\vspace{0.1in}
\begin{proof}\textbf{of Theorem~\ref{thm_c0_limiting}:}
\newline
Let $c \to 0^+$. By Lemma \ref{lemma:softmax}, the vector $\boldsymbol{\lambda}$ satisfies $\lambda_j = \sigma(\boldsymbol{\eta})_j + \mathcal{O}(pe^{-\eta_{\max}})$, where $\eta_j = \mathbf{f}_j \cdot \mathbf{l}$. Furthermore, the constraint $\sum \lambda_j = 1$ implies $\|\mathbf{l}\| \to \infty$. We analyze the behavior of the Softmax $\sigma(\boldsymbol{\eta})$ as $\|\mathbf{l}\| \to \infty$ by considering a sequence of parameter vectors $\mathbf{l}(t)$ such that $\|\mathbf{l}(t)\| \to \infty$ as $t \to \infty$. At a high level, we will first show that in the limit $\boldsymbol{\lambda}$ must concentrate only on indices in $\mathcal{I}_{\mathcal{P}}$ (i.e., on vertices in the upper right hull). Then, we will show how different constructions of the sequence $\mathbf{l}(t)$ correspond to concentration of $\boldsymbol{\lambda}$ on different indices within $\mathcal{I}_{\mathcal{P}}$.

\vspace{1em}
\noindent \textbf{Domination of Non-Pareto Points:}
Let $i \notin \mathcal{I}_{\mathcal{P}}$. By the definition of the upper-right boundary $\partial^+ \mathcal{H}$, there exists some index $j \in \mathcal{I}_{\mathcal{P}}$ such that $\mathbf{f}_j \succ \mathbf{f}_i$ (i.e., $f_{j1} \geq f_{i1}$ and $f_{j2} \geq f_{i2}$ with at least one inequality strict). For any $\mathbf{l} \in \mathbb{R}^2_{>0}$, it follows that $\eta_j - \eta_i = \mathbf{l} \cdot (\mathbf{f}_j - \mathbf{f}_i) > 0$. As $\|\mathbf{l}(t)\| \to \infty$, the ratio $\lambda_i / \lambda_j \to \exp(\mathbf{l}(t) \cdot (\mathbf{f}_i - \mathbf{f}_j)) \to 0$. Thus, $\lambda_i \to 0$ for all $i \notin \mathcal{I}_{\mathcal{P}}$.

\vspace{1em}
\noindent \textbf{1. Convergence to Edges:}
Fix some $q \in \mathcal{I}_{\mathcal{P}}$, where $1 \leq q \leq M - 1$. By Lemma \ref{lem:exposed_urhull}, there exists a vector $\mathbf{z}_q \in \mathbb{R}_{> 0}^{2}$ such that
    \[
      \mathbf{z}_q \cdot \mathbf{P}_{(q)} = \mathbf{z}_q \cdot \mathbf{P}_{(q+1)} > \mathbf{z}_q \cdot \mathbf{P}_{(j)} \quad \text{for all } j \notin \{q,q+1\}.
    \] Let $\mathbf{d}_q = \mathbf{P}_{(q + 1)} - \mathbf{P}_{(q)}$ and define the sequence
\begin{equation}
    \mathbf{l}(t) = t \mathbf{z}_q + \phi \mathbf{d}_q,
\end{equation}
where $\phi \in \mathbb{R}$ is a constant. Note that because $\mathbf{z}_q \in \mathbb{R}_{>0}^{2}$, we are guaranteed that $\mathbf{l}(t)$ will be element-wise positive for all $t$ sufficiently large. Now, let $j \in \mathcal{I}_{\mathcal{P}} \setminus \{q, q+1\}$. By Lemma \ref{lemma:softmax},
\begin{align*}
    \frac{\lambda_{idx(j)}}{\lambda_{idx(q)}} & \rightarrow \exp(\mathbf{l}(t) \cdot (\mathbf{P}_{(j)} - \mathbf{P}_{(q)})) \\
    &= \exp(\phi \mathbf{d}_q \cdot (\mathbf{P}_{(j)} - \mathbf{P}_{(q)})) \exp(t \mathbf{z}_q \cdot (\mathbf{P}_{(j)} - \mathbf{P}_{(q)}))\\
    &= a \exp(t \mathbf{z}_q \cdot (\mathbf{P}_{(j)} - \mathbf{P}_{(q)})) \qquad\qquad \textrm{(for some constant } a\textrm{)} \\
    & \rightarrow 0.
\end{align*}
Thus, $\boldsymbol{\lambda}$ must concentrate on $idx(q)$ and $idx(q + 1)$. For these adjacent vertices, we can write:
\begin{align}
    \eta_{idx(q+1)}(t) - \eta_{idx(q)}(t) &= \mathbf{l}(t) \cdot (\mathbf{P}_{(q+1)} - \mathbf{P}_{(q)}) \notag \\
    &= (t \mathbf{z}_q + \phi \mathbf{d}_q) \cdot \mathbf{d}_q \notag \\
    &= \phi \|\mathbf{d}_q\|^2, \notag
\end{align}
where the last line follows by the fact that $\mathbf{z}_q \cdot \mathbf{P}_{(q)} = \mathbf{z}_q \cdot \mathbf{P}_{(q+1)}$.
Applying Lemma \ref{lemma:softmax}, we have the following limit:
\begin{equation}
    \frac{\lambda_{idx(q+1)}}{\lambda_{idx(q)}} \to \exp(\phi \|\mathbf{d}_q\|^2).
\end{equation}
Since $\lambda_{idx(q+1)} + \lambda_{idx(q)} \rightarrow 1$, to achieve any given $\omega \in (0, 1)$ in equation \eqref{eq:pf_lambda_set}, we can set $$\phi = \frac{1}{||\mathbf{d}_q||^2} \log \left(\frac{\omega}{1 - \omega}\right).$$ 

\vspace{1em}
\noindent \textbf{2. Convergence to Vertices:}
Fix some $k \in \mathcal{I}_{P}$. By Lemma \ref{lem:exposed_urhull}, there exists a vector $\mathbf{x}_k \in \mathbb{R}_{>0}^{2}$ such that $\mathbf{P}_{(k)} \cdot \mathbf{x}_k > \mathbf{P}_{(j)} \cdot \mathbf{x}_k$ for all $j \neq k$. Let $\mathbf{l}(t) := t \mathbf{x}_k$ for $t > 0$. Then, for any $j \neq k$, as $t \rightarrow \infty$ we have by Lemma \ref{lemma:softmax}
\begin{equation}
    \frac{\lambda_{idx(j)}}{\lambda_{idx(k)}} \to \exp\left( t \; \mathbf{x}_k \cdot (\mathbf{P}_{(j)} - \mathbf{P}_{(k)}) \right).
\end{equation}
Since $\mathbf{x}_k \cdot (\mathbf{P}_{(j)} - \mathbf{P}_{(k)}) < 0$, as $t \rightarrow \infty$, $\lambda_j / \lambda_k \to 0$.  Consequently, $\boldsymbol{\lambda} \to \mathbf{e}_k$.

\vspace{1em}
To summarize, the cases above demonstrate that we can select any pair of adjacent vertices $\mathbf{P}_{(q)}, \mathbf{P}_{(q+1)} \in \partial^+ \mathcal{H}$, and then construct a sequence $\mathbf{l}(t)$ such that the resulting value of $\boldsymbol{\lambda}$ converges to $$\omega \mathbf{e}_{idx(q)} + (1 - \omega)\mathbf{e}_{idx(q+1)},$$ where $\omega \in (0, 1)$ is shown in case 1 and $\omega \in \{0, 1\}$ is shown in case 2. Because we are free to select any $q \in \mathcal{I}_{P}$, this means that the set of achievable values of $\boldsymbol{\lambda}$ must include the union of these values of $\boldsymbol{\lambda}$ corresponding to different choices of vertices (i.e., $\mathcal{L}$ in equation \eqref{eq:pf_lambda_set}). To show that the achievable values of $\boldsymbol{\lambda}$ are exactly equal to the set $\mathcal{L}$, we can simply invoke the general position assumption. That is, because $\mathbf{F}$ is in general position, for any direction $\mathbf{w}$ that we can select to increase $\mathbf{l}$ along, there can be no more than two rows $j$ maximizing $\mathbf{f}_j \cdot \mathbf{w}.$ Thus, the two cases outlined above are exhaustive and thus no other values of $\boldsymbol{\lambda}$ are achievable. 

\end{proof}

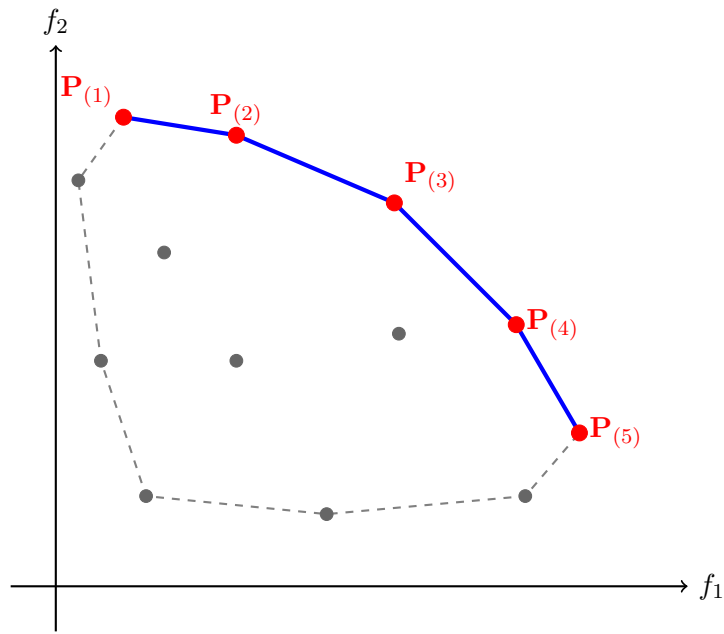
\begin{figure}
    \centering
\begin{tikzpicture}[scale=1.2]
    \draw[->, thick] (-0.5,0) -- (7,0) node[right] {$f_{1}$};
    \draw[->, thick] (0,-0.5) -- (0,6) node[above] {$f_{2}$};


    \coordinate (H1) at (0.75, 5.2); 
    \coordinate (H2) at (2, 5);
    \coordinate (H3) at (3.75, 4.25);
    \coordinate (H4) at (5.1, 2.9);
    \coordinate (H5) at (5.8, 1.7);

    \coordinate (I1) at (0.25, 4.5);   
    \coordinate (I_mid) at (0.5, 2.5); 
    \coordinate (I0) at (1, 1);        
    \coordinate (I6) at (3.0, 0.8);    
    \coordinate (I5) at (5.2, 1.0);    

    \coordinate (I2) at (1.2, 3.7);
    \coordinate (I3) at (2, 2.5);
    \coordinate (I4) at (3.8, 2.8);


    \draw[gray, dashed, thick] (H1) -- (I1) -- (I_mid) -- (I0) -- (I6) -- (I5) -- (H5);

    \draw[blue, ultra thick] (H1) -- (H2) -- (H3) -- (H4) -- (H5);

    \foreach \p in {I0,I1,I_mid,I2,I3,I4,I5,I6}
        \filldraw[black!60] (\p) circle (2pt);
        
    \foreach \p in {H1,H2,H3,H4,H5}
        \filldraw[red] (\p) circle (2.5pt);

    \node[above left, red] at (H1) {$\mathbf{P}_{(1)}$};
    \node[above, red] at (H2) {$\mathbf{P}_{(2)}$};
    \node[above right, red] at (H3) {$\mathbf{P}_{(3)}$};
    \node[right, red] at (H4) {$\mathbf{P}_{(4)}$};
    \node[right, red] at (H5) {$\mathbf{P}_{(5)}$};

\end{tikzpicture}
    \caption{Example of the upper right convex hull boundary ($\partial^+ \mathcal{H}$) of a set of points in $\mathbb{R}_{\geq 0}^{2}$ (where each point can be thought of as representing a row of $\mathbf{F}$). The solid blue line indicates the upper right convex hull boundary, where the dashed line indicates points on the convex hull boundary that are not in the upper right hull.}
    \label{fig:ur_cov_hull}
\end{figure}

\section{Additional figures}
\label{app:add_figs}
\begin{figure}
    \centering
    \includegraphics[width=1.0\linewidth]{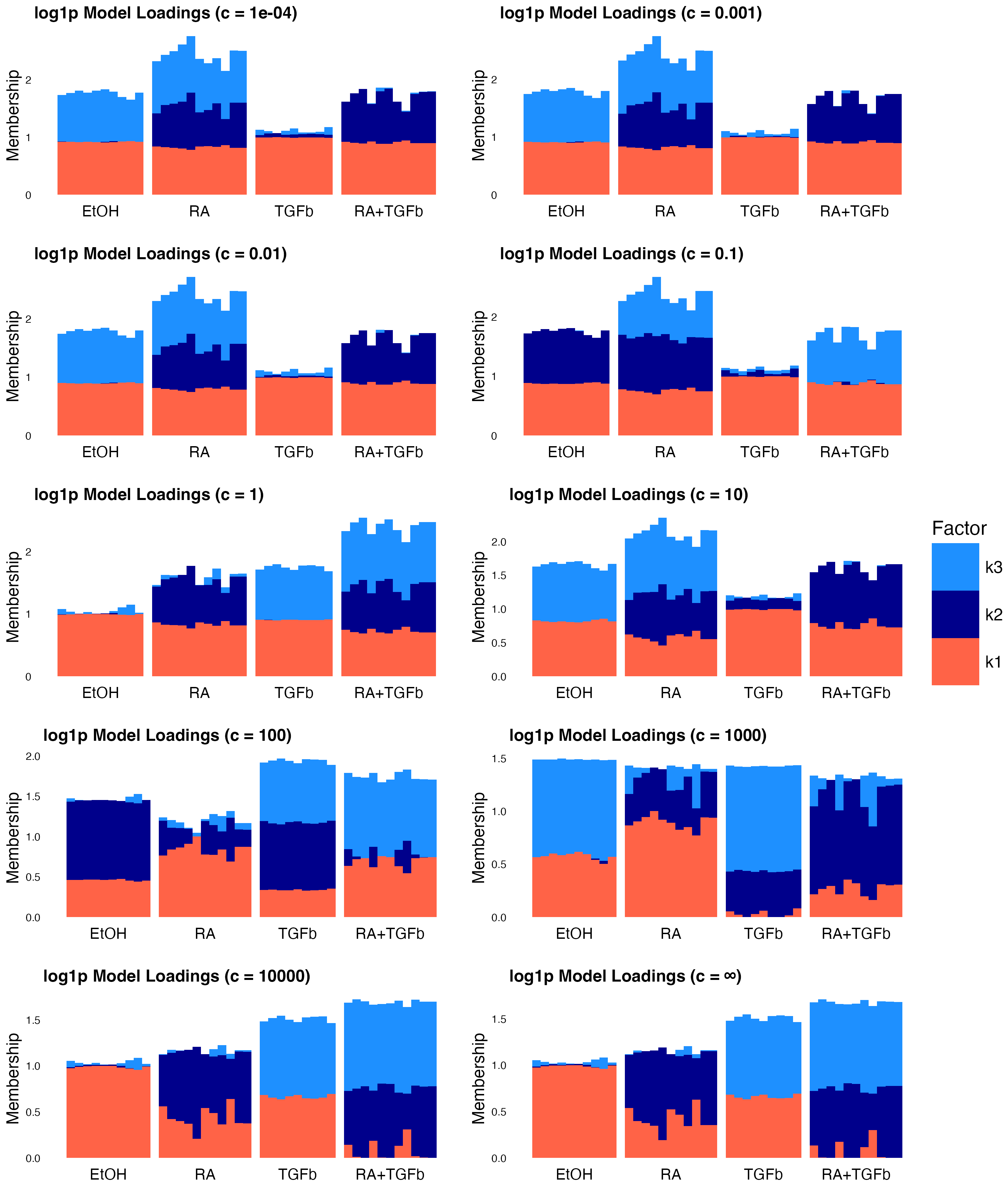}
    \caption{Structure plots for MCF-7 dataset across various values of $c$.}
    \label{fig:supp_mcf7}
\end{figure}

\begin{figure}
    \centering
    \includegraphics[width=0.8\linewidth]{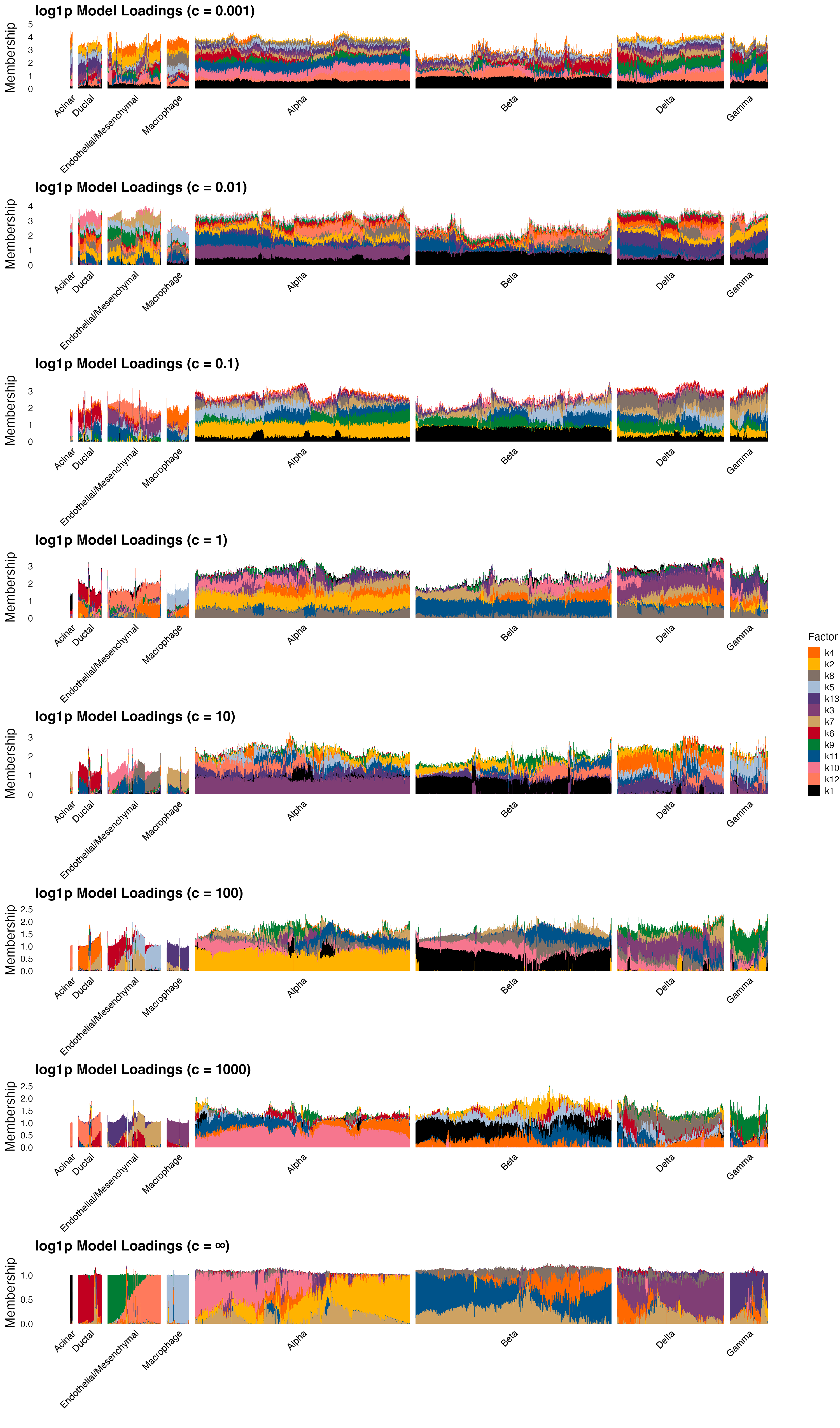}
    \caption{Structure plots for pancreas dataset across various values of $c$.}
    \label{fig:supp_lsa}
\end{figure}

\begin{figure}
    \centering
    \includegraphics[width=0.9\linewidth]{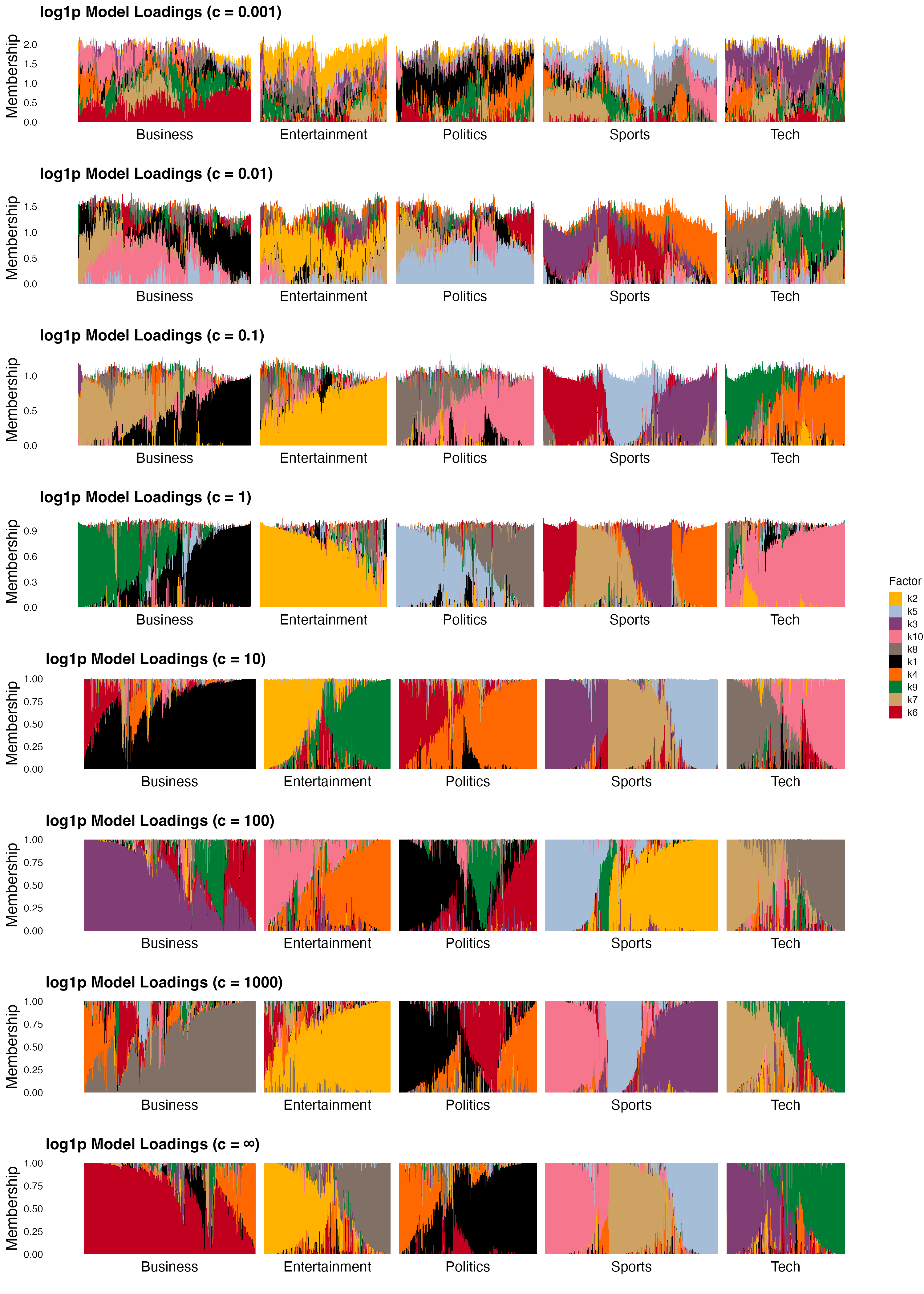}
    \caption{Structure plots for BBC dataset across various values of $c$.}
    \label{fig:supp_bbc}
\end{figure}

\clearpage
\vskip 0.2in
\bibliography{pois_log1p_nmf}

\end{document}